\newcommand{\newterm}[1]{{\bf #1}}
\def\eqref#1{equation~\ref{#1}}
\def\1{\bm{1}}
\def\mA{{\bm{A}}}
\DeclareMathAlphabet{\mathsfit}{\encodingdefault}{\sfdefault}{m}{sl}
\SetMathAlphabet{\mathsfit}{bold}{\encodingdefault}{\sfdefault}{bx}{n}
\def\gI{{\mathcal{I}}}
\def\gM{{\mathcal{M}}}
\def\gQ{{\mathcal{Q}}}
\def\gS{{\mathcal{S}}}
\def\gX{{\mathcal{X}}}
\def\gY{{\mathcal{Y}}}
\def\gZ{{\mathcal{Z}}}
\newcommand{\E}{\mathbb{E}}
\newcommand{\KL}{D_{\mathrm{KL}}}
\DeclareMathOperator*{\argmax}{arg\,max}
\DeclareMathOperator*{\argmin}{arg\,min}
\declaretheorem{theorem}
\declaretheorem{lemma}
\declaretheorem{proposition}
\declaretheorem{definition}
\declaretheorem[name=Algorithm]{algorithm_def}
\declaretheorem{assumption}
\author{\name Eason Yu \email yuyu0027@e.ntu.edu.sg \\
      \addr Nanyang Technological University, Singapore
      \AND
      \name Tzu Hao Liu \email tliu0205@sydney.edu.au \\
      \addr University of Sydney
      \AND
      \name Cl\'ement L. Canonne \email clement.canonne@sydney.edu.au \\
      \addr University of Sydney
      \AND
      \name Yunke Wang \email yunke.wang@sydney.edu.au \\
      \addr University of Sydney
      \AND
      \name Chang Xu \email c.xu@sydney.edu.au \\
      \addr University of Sydney
      \AND
      \name Nguyen H. Tran \email nguyen.tran@sydney.edu.au \\
      \addr University of Sydney
      \AND
      \name Stefano V. Albrecht \email stefano.albrecht@ntu.edu.sg \\
      \addr Nanyang Technological University, Singapore}
\title{\textsc{NashPG}: A Policy Gradient Method with Iteratively Refined Regularization for Finding Nash Equilibria}
\begin{document}

\maketitle

\begin{abstract}

Finding Nash equilibria in two-player zero-sum imperfect-information games remains a central challenge in multi-agent reinforcement learning. Recent multi-round regularization methods offer a promising direction, yet existing approaches either require full enumeration of the game tree or rely on non-policy-gradient inner solvers that underperform in practice, leaving a scalable policy-gradient-based solution open. In this paper, we propose a novel multi-round regularization procedure and show that it guarantees strictly monotonic reduction in Bregman divergence to Nash equilibria and eventual convergence to one in two-player zero-sum extensive-form games. Guided by this framework, we develop a practical algorithm, \emph{Nash Policy Gradient} (\textsc{NashPG}), which places the regularization directly in the policy optimization objective and is implemented using standard policy gradient methods. Empirically, \textsc{NashPG} achieves comparable or lower exploitability than prior model-free methods on classic benchmark games and scales to large domains such as \textit{Battleship} and \textit{No-Limit Texas Hold'em}, where it attains comparable or higher average payoff in head-to-head play.
\end{abstract}

\section{Introduction}
\label{sec:intro}

The success of deep reinforcement learning has sparked significant interest in multi-agent reinforcement learning (MARL)~\citep{marl-book}, where algorithms must handle environments with multiple interacting agents. Within MARL, two-player zero-sum (2p0s) imperfect-information games (IIGs) are one of the most studied settings because they capture key challenges such as partial observability and competition, while remaining more tractable than general-sum multi-player games. In these games, Nash equilibria are the central solution concept where agents are unexploitable even against optimal adversaries. Our goal is to develop general solvers for 2p0s IIGs, specifically MARL algorithms that are model-free, converge to a Nash equilibrium, and scale to large domains.

Recently, a new class of regularization-based methods~\citep{DBLP:conf/iclr/LiuOYZ23} has been proposed. These approaches add regularization terms to the learning objective~\citep{DBLP:journals/geb/HofbauerH05}, which make the resulting objective strongly convex--concave, yielding efficient update rules with last-iteration convergence to a regularized equilibrium~\citep{MCKELVEY19956}. However, a regularized equilibrium is not necessarily a Nash equilibrium, so the converged policy can remain exploitable. To recover Nash equilibria, several recent works have explored \emph{multi-round} schemes that keep the regularization strength fixed while periodically updating the regularization terms~\citep{perolat2021poincare, perolat2022mastering, DBLP:conf/icml/AbeASI24, lu2025divergence}. This direction is appealing because it offers a route to exact Nash equilibria without abandoning the stabilizing effect of regularization.

However, the practicality of multi-round regularization methods remains underexplored. Most prior work focuses on normal-form games (NFGs) or small extensive-form games (EFGs), where exact feedback is available. To our knowledge, one of the few practical instantiations is Regularized Nash Dynamics (R-NaD)~\citep{perolat2021poincare}. However, a recent study~\citep{rudolph2025reevaluating} shows that R-NaD underperforms policy gradient methods on imperfect-information benchmark games. We hypothesize that this surprising result is driven by differences in the maturity of the underlying policy's update method. R-NaD relies on Neural Replicator Dynamics (NeuRD)~\citep{hennes2020neural}, which is less studied compared to policy gradient methods which have benefited from decades of work on stability and performance (we explicitly test this hypothesis in Section~\ref{subsec:inner_update}). This motivates the question: \emph{Can we develop a policy-gradient-based algorithm that provably converges to a Nash equilibrium in 2p0s IIGs?}

We answer this question in two steps. We first introduce \emph{Iterative MMD} (\textsc{IMMD}), a multi-round extension of Magnetic Mirror Descent (MMD)~\citep{DBLP:conf/iclr/SokotaDKLLMBK23}, where each round uses the converged strategy profile from the previous round to define the regularization. MMD is a natural starting point because its mirror-descent-update for mixed strategies in EFGs naturally suggests a policy-gradient-update counterpart for stochastic policies in reinforcement learning context. Our main theoretical result shows that \textsc{IMMD} yields strictly monotonic reduction in Bregman divergence
to Nash equilibria.

\begin{theorem}(Proof in Appendix~\ref{proof:convergence})
    For a 2p0s EFG, let $z^*$ be a Nash equilibrium with $z^* \in \operatorname{int}\operatorname{dom}\psi \cap \gZ$. Starting from an initial strategy profile $z_0$, \textsc{IMMD} generates a sequence of strategy profiles $\{z_t\}$ satisfying
    \begin{equation}
        B_\psi(z^*; z_t) > B_\psi(z^*; z_{t+1}),
    \end{equation}
    for all $t$ such that $z_t$ is not a Nash equilibrium, where $B_\psi$ denotes the Bregman divergence. Moreover, $\{z_t\}$ converges to a Nash equilibrium, i.e., $\lim_{t \to \infty} z_t$ is a Nash equilibrium.
\end{theorem}

This result motivates a practical MARL design: retain the multi-round regularization structure of \textsc{IMMD}, and implement the policy update with policy gradient methods. We instantiate this idea in \emph{Nash Policy Gradient} (\textsc{NashPG}), which places the regularization directly in the policy objective. This design allows \textsc{NashPG} to be implemented using standard \emph{policy gradient} methods, such as PPO~\citep{DBLP:journals/corr/SchulmanWDRK17}. Empirically, we show that \textsc{NashPG} achieves comparable or lower exploitability than prior model-free methods on classic benchmark games including Leduc Poker, Dark Hex $3\times3$, and Phantom Tic-Tac-Toe. To demonstrate scalability, we further evaluate \textsc{NashPG} on large-scale domains such as \textit{Battleship} and \textit{Heads-Up No-Limit Texas Hold'em}, where it achieves higher average payoff in head-to-head play.\footnote{Code for \textsc{NashPG} is available at \url{https://github.com/ntu-agents/nashpg}.}

\section{Related Work}
\label{sec:related_work}

Existing methods for solving 2p0s IIGs can be broadly grouped into four categories. The main axes relevant to this paper are whether a method is model-free, whether it relies on populations or average policies, and whether it targets exact Nash equilibria.

\noindent\textbf{Population-Based Methods.}
Representative algorithms in this category include Neural Fictitious Self-Play (NFSP)~\citep{DBLP:journals/corr/HeinrichS16} and Policy Space Response Oracles (PSRO)~\citep{DBLP:conf/nips/LanctotZGLTPSG17}, along with more recent extensions built on the same paradigm~\citep{DBLP:conf/iclr/McAleerLWBSF24, DBLP:conf/nips/McAleerLWBF21, DBLP:conf/ijcai/BighashdelWMSO24}. These methods iteratively compute best-responses against mixtures of prior policies using deep reinforcement learning (DRL) and can scale to complex domains such as StarCraft~\citep{vinyals2019grandmaster}. However, they require repeated best-response computations, resulting in high computational overhead. Moreover, they only guarantee average-iterate convergence where policy extraction is non-trivial in DRL settings, often requiring supervised learning or the storage of historical agents to obtain a final policy~\citep{LanctotEtAl2019OpenSpiel}. In contrast, \textsc{NashPG} does not require policy extraction or maintaining a growing population of policies, at any point it only keeps the current policy and the regularization reference policy.

\noindent\textbf{CFR-Inspired Methods.}
Counterfactual Regret Minimization (CFR)~\citep{DBLP:conf/nips/ZinkevichJBP07}, most notable for its success in poker games, enabled superhuman performance in Heads-Up No-Limit Texas Hold'em~\citep{doi:10.1126/science.aao1733}. Deep CFR~\citep{DBLP:conf/icml/BrownLGS19} integrates function approximation for larger state spaces. However, CFR relies on explicit game models for tree traversal, which limits its generalizability, and often relies on high-variance sampling to estimate counterfactual values. While some model-free variants do exist~\citep{DBLP:journals/corr/abs-2006-10410}, they still suffer from large variance issues, and variance-reduction techniques~\citep{DBLP:conf/iclr/McAleerFLS23} can introduce exploration biases. Similar to population-based methods, CFR approaches typically guarantee only average-iterate convergence. Recent benchmarking shows that these model-free CFR methods underperform compared to MMD~\citep{DBLP:conf/iclr/SokotaDKLLMBK23} and R-NaD~\citep{perolat2021poincare} on IIG benchmarks~\citep{rudolph2025reevaluating}; we therefore scope our experimental comparison to the latter stronger baselines. In contrast, \textsc{NashPG} is model-free and uses standard policy-gradient updates from sampled trajectories, avoiding high-variance counterfactual estimators and additional policy extraction.

\noindent\textbf{Regularization-Based Methods.}
These methods~\citep{DBLP:conf/iclr/LiuOYZ23} address last-iterate convergence by reformulating games as strongly convex--concave objectives, typically converging to entropy-regularized Nash equilibria such as Quantal Response Equilibria~\citep{MCKELVEY19956}. Several recent works~\citep{DBLP:conf/aistats/AbeASTI23, DBLP:conf/nips/CenWC21, DBLP:conf/innovations/DaskalakisP19} establish strong convergence guarantees in this setting. Among them, Magnetic Mirror Descent~\citep{DBLP:conf/iclr/SokotaDKLLMBK23} is especially relevant to our theoretical development, as our analysis builds on the same variational inequality (VI)~\citep{facchinei2003finite} perspective. However, regularized equilibria are not necessarily Nash equilibria, so the converged policy can remain exploitable. Our work builds on this foundation and uses a multi-round refinement procedure to target exact Nash equilibria.

\noindent\textbf{Multi-Round Regularization Methods.}
Most relevant to our work are methods that periodically update a regularization reference policy. Within this line of work, a useful distinction is between \emph{enumerative} methods, whose updates require full gradient information or evaluation over all decision points, and \emph{sample-based} methods, which learn from sampled trajectories. The first group includes Adaptively Perturbed Mirror Descent (APMD)~\citep{DBLP:conf/icml/AbeASI24} and Divergence-Regularized Discounted Aggregation (DRDA)~\citep{lu2025divergence}. Our proposed \textsc{IMMD} also belongs to this group, but differs in how regularization enters the update: \textsc{IMMD} adds regularization directly to the \emph{payoff function}, whereas APMD perturbs the \emph{gradient field}, and DRDA regularizes a discounted Follow the Regularized Leader (FTRL)~\citep{DBLP:conf/nips/Shalev-ShwartzS06} style value dynamic. These methods are theoretically relevant, but their reliance on full enumeration makes them intractable in large domains, which motivates the need for practical sample-based methods.

Among sample-based methods, the closest prior work to \textsc{NashPG} is R-NaD~\citep{perolat2021poincare}, which combines multi-round regularization with Neural Replicator Dynamics (NeuRD)~\citep{hennes2020neural}. However, recent evidence shows that R-NaD underperforms policy gradient methods on IIGs benchmark~\citep{rudolph2025reevaluating}, so this direction remains practically unresolved. \textsc{NashPG} revisits the same multi-round regularization framework with a different inner optimization design, instead of incorporating regularization through reward transformation, it optimizes the regularized policy objective directly, allowing the method to be implemented using standard policy-gradient methods. Table~\ref{tab:multi_round_regularization} summarizes these distinctions.

\begin{table}[H]
\centering
\small

\begin{tabular}{lcccccc}
\toprule
Algorithm & Inner update & Regularization enters & Update type & Guarantee \\
\midrule
\textbf{\textsc{IMMD} (ours)} & Mirror descent & Payoff function & Enumerative & Asymptotic \\
APMD & Mirror descent & Gradient field & Enumerative & $O(\log T / \sqrt{T})$ rate \\
DRDA & Discounted FTRL & Policy regularized dynamic & Enumerative & Asymptotic \\
\midrule
\textbf{\textsc{NashPG} (ours)} & Policy gradient & Policy objective & Sample-based & None \\
R-NaD & NeuRD & Reward function & Sample-based & Asymptotic\footnotemark \\
\bottomrule
\end{tabular}
\caption{Comparison with representative multi-round regularization methods.}
\label{tab:multi_round_regularization}
\end{table}

\footnotetext{For R-NaD, the asymptotic exact Nash guarantee refers to the idealized setting where each inner round is solved exactly. This guarantee does not directly transfer to the practical NeuRD implementation with function approximation.}

\section{Preliminaries}
In this paper, we focus on analyzing two-player zero-sum games. We first establish the necessary background and notation for extensive-form games, then present their connection to variational inequality problems and regularized variants that form the foundation of our approach. We use standard mathematical notation for concepts such as L-smoothness, strong convexity, monotonicity, and Bregman divergence (see Appendix~\ref{app:notation} for formal definitions).

\subsection{Game Theory Background}

We consider two-player zero-sum \newterm{extensive-form games} (EFG) with perfect recall and imperfect information. The game is represented by a finite game tree with the following components: Let the players be denoted by $i \in \{1, 2\}$. The set of states is denoted by $\gS$, with terminal states $\gQ \subseteq \gS$ determining payoffs, where player 1's payoff is the negative of player 2's payoff. Each non-terminal state $s \in \gS \setminus \gQ$ is controlled by either a player or chance. For each player $i$, decision nodes are partitioned into information sets $I \in \gI_i$, where the player cannot distinguish between states within the same information set. The set of actions available at information set $I$ is denoted by $A(I)$. A behavioral strategy for player $i$ is a mapping $\pi^{(i)}\colon \gI_i \to \Delta(A(I))$, assigning a probability distribution over actions at each information set. A strategy profile $\pi$ is a collection of both players' strategies, and $\pi^{(-i)}$ refers to player $i$'s opponent strategy.

A two-player zero-sum extensive-form game induces an equivalent \newterm{normal-form game}~\citep{kuhn1953extensive} with payoff matrix $\mA$, where each row and column correspond to a pure strategy (deterministic action plan across all information sets) for players 1 and 2, respectively. A mixed strategy is a probability distribution over pure strategies. Let $\gX$ and $\gY$ denote the mixed strategy spaces for players 1 and 2, with $x \in \gX$ and $y \in \gY$ representing their mixed strategies. The payoff function $f(x,y) = x^\top \mA y$ gives player 1's expected payoff, similarly, player 2's expected payoff is $-f(x,y)$. When referring to mixed strategies in EFGs, we refer to those defined through this normal-form representation.

A \newterm{Nash equilibrium} (NE) is a strategy profile $\pi^* = (\pi^{(1)*}, \pi^{(2)*})$ in behavioral strategies or $z^* = (x^*, y^*)$ in mixed strategies, where each player's strategy is a best-response to the other's. Formally, for the normal-form representation, a Nash equilibrium satisfies:
$$x^* \in \arg\max_{x \in \gX} f(x, y^*) \quad \text{and} \quad y^* \in \arg\max_{y \in \gY} -f(x^*, y)$$
That is, no player can improve their expected payoff by unilaterally deviating from their equilibrium strategy. Our goal is to develop efficient algorithms to find Nash equilibria in these games.

\subsection{Connection With Variational Inequality}
Finding the Nash equilibrium is equivalent to solving the variational inequality (VI) problem, which forms the foundation for understanding both the challenges and opportunities in our approach.
\begin{definition}[Variational Inequality Problem]
\label{def:vi}
Given a closed convex set $\gZ \subseteq \mathbb{R}^n$ and a mapping $G: \gZ \to \mathbb{R}^n$, the variational inequality problem $\mathrm{VI}(\gZ, G)$ is to find $z^* \in \gZ$ such that
\begin{equation}
\langle G(z^*), z - z^* \rangle \geq 0 \quad \forall z \in \gZ. \label{eq:vi_def}
\end{equation}
\end{definition}

In a two-player zero-sum extensive-form game, let $\gX$ and $\gY$ denote the mixed strategy spaces for players 1 and 2, respectively, and let $\mA$ be the payoff matrix of the equivalent normal-form game. We recall the fundamental connection between Nash equilibria and variational inequalities by defining $\gZ = \gX \times \gY$ with strategy profile $z = (x, y) \in \gZ$ and the operator $F\colon \gZ \to \mathbb{R}^n$:
\begin{equation}
F(z) = \begin{bmatrix} -\nabla_x f(x,y) \\ \nabla_y f(x,y) \end{bmatrix}
\end{equation}
where $f(x,y) = x^\top \mA y$ is the payoff function.

\begin{theorem}[Nash-VI Equivalence]
\label{thm:nash_vi_equiv}
A strategy profile $z^* \in \gZ$ is a Nash equilibrium if and only if it solves the variational inequality problem $\mathrm{VI}(\gZ, F)$~\citep[Section 1.4.2]{facchinei2003finite}:
\begin{equation}
\langle F(z^*), z - z^* \rangle \geq 0 \quad \forall z \in \gZ. \label{eq:vi_ne}
\end{equation}
\end{theorem}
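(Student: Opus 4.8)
The plan is to exploit two structural features of $\mathrm{VI}(\gZ, F)$: the feasible set $\gZ = \gX \times \gY$ is a Cartesian product, and the operator evaluated at $z^* = (x^*, y^*)$ has the block form $F(z^*) = (-\mA y^*,\, \mA^\top x^*)$, whose first block depends only on $y^*$ and whose second block depends only on $x^*$. Together these let the single VI inequality decouple into one inequality per player, each of which matches a best-response condition. As a preliminary step I would record the gradients $\nabla_x f(x,y) = \mA y$ and $\nabla_y f(x,y) = \mA^\top x$ (immediate from $f(x,y) = x^\top \mA y$) and expand, for an arbitrary test point $z = (x,y) \in \gZ$,
\begin{equation}
\langle F(z^*), z - z^* \rangle = \langle -\mA y^*,\, x - x^* \rangle + \langle \mA^\top x^*,\, y - y^* \rangle. \label{eq:vi_expand_sketch}
\end{equation}

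For the direction NE $\Rightarrow$ VI, suppose $z^*$ is a Nash equilibrium. Player 1's best-response condition $f(x^*, y^*) \geq f(x, y^*)$ for all $x \in \gX$ rearranges to $\langle -\mA y^*, x - x^* \rangle \geq 0$; symmetrically, player 2's condition $-f(x^*, y^*) \geq -f(x^*, y)$ for all $y \in \gY$ rearranges to $\langle \mA^\top x^*, y - y^* \rangle \geq 0$. Substituting both into \eqref{eq:vi_expand_sketch} exhibits the right-hand side as a sum of two nonnegative terms, so $\langle F(z^*), z - z^*\rangle \geq 0$ for every $z \in \gZ$. For the converse VI $\Rightarrow$ NE, I would use the product structure to choose coordinate-wise test points: fixing any $x \in \gX$ and taking $z = (x, y^*) \in \gZ$ makes the second term of \eqref{eq:vi_expand_sketch} vanish and leaves $\langle -\mA y^*, x - x^* \rangle \geq 0$, i.e. $f(x^*, y^*) \geq f(x, y^*)$, so $x^* \in \argmax_{x \in \gX} f(x, y^*)$; taking $z = (x^*, y)$ for arbitrary $y \in \gY$ recovers $y^*$'s best-response condition in the same way. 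Together these are exactly the Nash conditions.

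The argument is essentially textbook, so I do not anticipate a genuine obstacle; the one point requiring care is the equivalence between ``best response'' and ``first-order VI inequality.'' Because $f(\cdot, y^*)$ is linear — hence concave — in $x$ over the convex set $\gX$, the stationarity condition $\langle -\mA y^*, x - x^* \rangle \geq 0$ is both necessary \emph{and} sufficient for $x^*$ to maximize $f(\cdot, y^*)$, and symmetrically for $y^*$; this is precisely where bilinearity of the payoff enters, and it is what makes both implications reversible. I would also flag that the decoupling in the converse direction relies essentially on $\gZ$ being a product set, so that $(x, y^*)$ and $(x^*, y)$ are admissible test points — a structural feature that would fail for games with coupled joint constraints.
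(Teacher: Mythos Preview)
Your proof is correct and is exactly the standard textbook argument. Note, however, that the paper does not actually prove this theorem: it simply states it with a citation to \citet[Section 1.4.2]{facchinei2003finite} and moves on, treating the Nash--VI equivalence as background. So there is no ``paper's own proof'' to compare against; your write-up is a clean, self-contained verification of the cited result, and the care you take with bilinearity (making the first-order stationarity condition both necessary and sufficient) and with the product structure of $\gZ$ (enabling coordinate-wise test points) are precisely the points that make the equivalence work.
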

Thus, computing Nash equilibrium reduces to solving $\mathrm{VI}(\gZ,F)$. However, this creates a significant computational challenge: \textit{The operator $F$ is monotone but not strongly monotone, making $\mathrm{VI}(\gZ, F)$ difficult to solve directly}.

\subsection{Regularized Variational Inequality}

To address the lack of strong monotonicity in $F$, \citet{DBLP:conf/iclr/SokotaDKLLMBK23} consider a \emph{regularized} VI problem $\mathrm{VI}(\gZ, G_\rho)$, where a strongly convex regularizer is added to the operator $F$. The resulting operator $G_\rho$ becomes \emph{strongly monotone}, ensuring a unique solution~\citep{larsson1994class} and enabling efficient algorithms.

Formally, let $\rho = (\rho_1, \rho_2) \in \gZ$ be a reference strategy profile. With a strongly convex function $\psi(z) = \psi_1(x) + \psi_2(y)$, the associated Bregman divergence is
\begin{equation}
B_\psi(z;\rho) = B_{\psi_1}(x;\rho_1) + B_{\psi_2}(y;\rho_2),
\end{equation}
where $\psi_1$ and $\psi_2$ are strongly convex for each player. The regularized operator $G_\rho: \gZ \to \mathbb{R}^n$ is
\begin{equation} \label{eq:reg_operator}
G_\rho(z) = F(z) + \alpha \nabla_z B_\psi(z;\rho),
\end{equation}
with regularization parameter $\alpha > 0$. The corresponding variational inequality seeks $\hat{z} \in \gZ$ such that
\begin{equation}
    \langle G_\rho(\hat{z}), z - \hat{z} \rangle \geq 0 \quad \forall z \in \gZ.
\end{equation}

Under Assumption~\ref{assump:mmd}, the VI problem $\mathrm{VI}(\gZ, G_\rho)$ can be solved efficiently by Magnetic Mirror Descent (MMD)~\citep{DBLP:conf/iclr/SokotaDKLLMBK23}:

\begin{assumption}[MMD's Convergence Conditions]\,
\label{assump:mmd}
\begin{enumerate}
    \item $\psi$ is $\mu$-strongly convex over $\gZ$ and differentiable on $\operatorname{int}\operatorname{dom} \psi$ (the interior of the domain of $\psi$) for some $\mu > 0$.
    \item $z_{t+1} \in \operatorname{int}\operatorname{dom} \psi$ for all iterations.
    \item $F$ is $L$-smooth and $\alpha, \eta$ satisfy $\alpha \geq \mu \eta L^2$.
\end{enumerate}
\end{assumption}

\begin{algorithm_def}[Magnetic Mirror Descent (MMD)]
\label{alg:mmd_def}
Initialize $z_0 = (x_0, y_0) \in \operatorname{int}\operatorname{dom} \psi \cap \gZ$, reference $\rho = (\rho_1, \rho_2) \in \operatorname{int}\operatorname{dom} \psi$, and step size $\eta > 0$. At each iteration $t$:
\begin{align*}
x_{t+1} &= \argmax_{x \in \gX} \Big\{ \eta \big(\langle \nabla_{x_t}f(x_t, y_t), x \rangle - \alpha B_{\psi_1}(x;\rho_1)\big) - B_{\psi_1}(x;x_t) \Big\}, \\
y_{t+1} &= \argmin_{y \in \gY} \Big\{ \eta \big(\langle \nabla_{y_t}f(x_t, y_t), y \rangle + \alpha B_{\psi_2}(y;\rho_2)\big) + B_{\psi_2}(y;y_t) \Big\}.
\end{align*}
With Assumption~\ref{assump:mmd}, MMD converges to the unique solution of $\mathrm{VI}(\gZ, G_\rho)$. 
\end{algorithm_def}

\section{Methodology}

To find exact Nash equilibria instead of regularized equilibria, we propose an \emph{iterative-refinement} approach in which we solve a sequence of regularized problems $\mathrm{VI}(\gZ, G_\rho)$ and use the solution of the previous step as the reference strategy profile $\rho$ for the next.

Our approach unfolds in two stages. First, we study this iterative-refinement scheme at the mixed-strategy level and instantiate it with Magnetic Mirror Descent, yielding \emph{Iterative MMD} (\textsc{IMMD}) (Sections~\ref{sec:reg_vi_operator}--\ref{sec:iterative_refinement}), and prove strictly monotone improvement and convergence to a Nash equilibrium. Next, we derive a practical algorithm, \emph{Nash Policy Gradient} (\textsc{NashPG}), via a series of approximations (Section~\ref{sec:practical_implementation}), yielding a method that integrates naturally with RL frameworks.

\subsection{Regularized VI as an Operator}
\label{sec:reg_vi_operator}

We begin by viewing the regularized VI problem as an operator that maps a reference strategy profile $\rho$ to the unique solution of $\mathrm{VI}(\gZ, G_\rho)$.

\begin{definition}[Regularized VI Operator]
\label{def:reg_vi_operator}
Define the operator $\gM: \operatorname{int}\operatorname{dom}\psi \cap \gZ \to \operatorname{int}\operatorname{dom}\psi \cap \gZ$, where the input is the reference strategy profile $\rho$ and the output $\gM(\rho)$ is the unique solution of $\mathrm{VI}(\gZ, G_\rho)$. We assume $\gM(\rho) \in \operatorname{int}\operatorname{dom}\psi \cap \gZ$.
\end{definition}

Our theoretical results show that this operator satisfies three key properties. 

First, applying $\gM$ never increases the Bregman divergence to a Nash equilibrium. This establishes the soundness of $\gM$ as a refinement step.

\begin{restatable}[Distance Non-increase Property]{lemma}{distancenonincreaselemma}
\label{lem:distance_non_increase}
Given $\alpha > 0$, let $\rho \in \operatorname{int}\operatorname{dom} \psi \cap \gZ$ and $z^*$ be any Nash equilibrium (i.e., a solution of $\mathrm{VI}(\gZ, F)$). Then:
\begin{equation}
B_\psi(z^*; \rho) \geq B_\psi(z^*; \gM(\rho)) + B_\psi(\gM(\rho); \rho).
\end{equation}
\end{restatable}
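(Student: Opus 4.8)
The plan is to exploit the two variational-inequality characterizations available to us — one for $\hat z := \gM(\rho)$ as the solution of $\mathrm{VI}(\gZ, G_\rho)$, and one for $z^*$ as a solution of $\mathrm{VI}(\gZ, F)$ — and to glue them together using the monotonicity of $F$ and the three-point identity for Bregman divergences. The only nontrivial content is a short sign-chasing argument; everything else is a bookkeeping application of the definitions.

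First I would instantiate the defining inequality for $\hat z$ at the test point $z^* \in \gZ$, obtaining $\langle G_\rho(\hat z),\, z^* - \hat z\rangle \ge 0$. Substituting $G_\rho(\hat z) = F(\hat z) + \alpha \nabla_z B_\psi(\hat z;\rho)$ splits this into a payoff part $\langle F(\hat z),\, z^* - \hat z\rangle$ and a regularization part $\alpha \langle \nabla_z B_\psi(\hat z;\rho),\, z^* - \hat z\rangle$, so that $\alpha \langle \nabla_z B_\psi(\hat z;\rho),\, z^* - \hat z\rangle \ge -\langle F(\hat z),\, z^* - \hat z\rangle$.

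The crux is to show the regularization part is non-negative, which I would get by arguing the payoff part is non-positive. Instantiating the VI for $z^*$ at the test point $\hat z$ gives $\langle F(z^*),\, \hat z - z^*\rangle \ge 0$; combining this with monotonicity of $F$, namely $\langle F(\hat z) - F(z^*),\, \hat z - z^*\rangle \ge 0$, yields $\langle F(\hat z),\, \hat z - z^*\rangle \ge 0$, i.e. $\langle F(\hat z),\, z^* - \hat z\rangle \le 0$. Feeding this back into the split inequality and dividing by $\alpha > 0$ leaves $\langle \nabla_z B_\psi(\hat z;\rho),\, z^* - \hat z\rangle \ge 0$.

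Finally I would invoke the three-point identity $B_\psi(z^*;\rho) = B_\psi(z^*;\hat z) + B_\psi(\hat z;\rho) + \langle \nabla_z B_\psi(\hat z;\rho),\, z^* - \hat z\rangle$, which follows from $\nabla_z B_\psi(\hat z;\rho) = \nabla \psi(\hat z) - \nabla \psi(\rho)$ and the definition of the divergence; discarding the non-negative inner-product term gives exactly $B_\psi(z^*;\rho) \ge B_\psi(z^*;\gM(\rho)) + B_\psi(\gM(\rho);\rho)$. The main obstacle is the middle step: the whole result hinges on monotonicity of $F$ converting the optimality of $z^*$ into a usable bound on $\langle F(\hat z),\, z^* - \hat z\rangle$, so I would make sure $F$ is genuinely monotone (guaranteed here, since it arises from the convex–concave bilinear payoff) and that $\hat z \in \operatorname{int}\operatorname{dom}\psi$ so the gradients appearing in the three-point identity are well-defined — both of which are supplied by the standing assumptions on $\gM$ and $\psi$.
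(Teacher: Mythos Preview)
Your proposal is correct and follows essentially the same route as the paper: instantiate the two VI conditions (for $\hat z=\gM(\rho)$ and for $z^*$), combine them with the monotonicity of $F$ to deduce $\langle \nabla\psi(\hat z)-\nabla\psi(\rho),\,z^*-\hat z\rangle\ge 0$, and then apply the three-point identity. The only cosmetic difference is that the paper adds the two instantiated VI inequalities directly and rearranges, whereas you first isolate $\langle F(\hat z),\,z^*-\hat z\rangle\le 0$ before feeding it back; the logical content is identical.
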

(Proof in Appendix~\ref{proof:distance_non_increase}.)

Second, the fixed points of $\gM$ are precisely the Nash equilibria. In other words, once the procedure reaches a Nash equilibrium, it remains there; this naturally characterizes the termination criterion for our refinement process.

\begin{restatable}[Fixed Point Characterization]{lemma}{fixedpointlemma}
\label{lem:fixed_point}
A strategy profile $z^* \in \operatorname{int}\operatorname{dom}\psi \cap \gZ$ is a Nash equilibrium if and only if it is a fixed point of the regularized VI operator: $z^* = \gM(z^*)$.
\end{restatable}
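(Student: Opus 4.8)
The plan is to establish both implications directly from the Nash--VI equivalence (Theorem~\ref{thm:nash_vi_equiv}), leaning on a single algebraic observation that makes the regularized and unregularized VI conditions coincide at a fixed point. The observation I would record first is that the gradient of the Bregman term vanishes at the reference: writing $B_\psi(z;\rho) = \psi(z) - \psi(\rho) - \langle\nabla\psi(\rho), z-\rho\rangle$ gives $\nabla_z B_\psi(z;\rho) = \nabla\psi(z) - \nabla\psi(\rho)$, so that $\nabla_z B_\psi(z;\rho)\big|_{z=\rho} = 0$. Setting the reference equal to the evaluation point therefore collapses the regularized operator onto $F$, i.e. $G_{z^*}(z^*) = F(z^*) + \alpha\,\nabla_z B_\psi(z;z^*)\big|_{z=z^*} = F(z^*)$. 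This identity is the whole engine of the proof.

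For the forward direction I would assume $z^*$ is a Nash equilibrium, so by Theorem~\ref{thm:nash_vi_equiv} it satisfies $\langle F(z^*), z - z^*\rangle \geq 0$ for all $z \in \gZ$. Substituting the identity $G_{z^*}(z^*) = F(z^*)$ shows $\langle G_{z^*}(z^*), z - z^*\rangle \geq 0$ for all $z \in \gZ$, i.e. $z^*$ solves $\mathrm{VI}(\gZ, G_{z^*})$. Since $\psi$ is strongly convex, $G_{z^*}$ is strongly monotone and its VI solution is unique, so by Definition~\ref{def:reg_vi_operator} that solution is exactly $\gM(z^*)$; hence $z^* = \gM(z^*)$.

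For the backward direction I would assume $z^* = \gM(z^*)$, so by construction $z^*$ solves $\mathrm{VI}(\gZ, G_{z^*})$, giving $\langle G_{z^*}(z^*), z - z^*\rangle \geq 0$ for all $z \in \gZ$. The same identity reduces this to $\langle F(z^*), z - z^*\rangle \geq 0$ for all $z \in \gZ$, so $z^*$ solves $\mathrm{VI}(\gZ, F)$ and is a Nash equilibrium by Theorem~\ref{thm:nash_vi_equiv}.

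I do not anticipate a substantive obstacle; the argument is essentially a two-line consequence of the vanishing-gradient identity. The only points needing care are (i) confirming that the regularizer enters $G_\rho$ through differentiation in the \emph{first} argument of $B_\psi$ with $\rho$ held fixed, so that evaluation at $z=\rho$ genuinely yields zero rather than a spurious cross term, and (ii) invoking uniqueness in the forward direction only after noting strong monotonicity of $G_{z^*}$, which is what lets us identify the constructed solution with $\gM(z^*)$ rather than merely \emph{some} solution of $\mathrm{VI}(\gZ, G_{z^*})$.
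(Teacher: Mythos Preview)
Your proposal is correct and follows essentially the same argument as the paper: both hinge on the identity $G_{z^*}(z^*)=F(z^*)$ (from $\nabla_z B_\psi(z;z^*)\big|_{z=z^*}=0$), use it to transfer the VI condition between $F$ and $G_{z^*}$ in each direction, and invoke uniqueness of the strongly monotone VI solution to identify $z^*$ with $\gM(z^*)$ in the forward implication. Your added remarks on the differentiation convention and on where uniqueness is actually needed are accurate and match the paper's implicit reasoning.
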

(Proof in Appendix~\ref{proof:fixed_point}.)

Finally, the operator $\gM$ is continuous, which will be useful in establishing convergence guarantees in the next section.

\begin{restatable}[Continuity of $\gM$]{lemma}{continuitylemma}
    \label{lem:continuity} 
Assume $\psi$ is $\mu$-strongly convex and continuously differentiable on 
$\operatorname{int}\operatorname{dom}\psi$ for some $\mu>0$. Then the operator $\gM$ is continuous. 
\end{restatable}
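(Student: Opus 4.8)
The plan is to prove a quantitative stability estimate for the solution map of the regularized VI and then read off continuity as a corollary. Fix two reference profiles $\rho, \rho' \in \operatorname{int}\operatorname{dom}\psi \cap \gZ$ and write $\hat z = \gM(\rho)$ and $\hat z' = \gM(\rho')$ for the associated (unique) solutions of $\mathrm{VI}(\gZ, G_\rho)$ and $\mathrm{VI}(\gZ, G_{\rho'})$, which are well defined by strong monotonicity. The starting point is the pair of VI optimality conditions: testing $\mathrm{VI}(\gZ, G_\rho)$ at $z = \hat z'$ gives $\langle G_\rho(\hat z), \hat z' - \hat z\rangle \ge 0$, and testing $\mathrm{VI}(\gZ, G_{\rho'})$ at $z = \hat z$ gives $\langle G_{\rho'}(\hat z'), \hat z - \hat z'\rangle \ge 0$. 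Adding them yields $\langle G_\rho(\hat z) - G_{\rho'}(\hat z'), \hat z - \hat z'\rangle \le 0$, the single inequality that drives the whole argument.

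The key algebraic observation is that the reference enters the operator only through the mirror map: since $B_\psi(z;\rho) = \psi(z) - \psi(\rho) - \langle \nabla\psi(\rho), z-\rho\rangle$, we have $\nabla_z B_\psi(z;\rho) = \nabla\psi(z) - \nabla\psi(\rho)$, and hence $G_\rho(z) = F(z) + \alpha\big(\nabla\psi(z) - \nabla\psi(\rho)\big)$. Substituting this into the combined inequality and writing $d = \hat z - \hat z'$, the difference $G_\rho(\hat z) - G_{\rho'}(\hat z')$ splits cleanly into a solution-dependent part $[F(\hat z) - F(\hat z')] + \alpha[\nabla\psi(\hat z) - \nabla\psi(\hat z')]$ and a reference-dependent part $-\alpha[\nabla\psi(\rho) - \nabla\psi(\rho')]$. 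I would then use monotonicity of $F$ (giving $\langle F(\hat z) - F(\hat z'), d\rangle \ge 0$) together with $\mu$-strong convexity of $\psi$ (giving $\langle \nabla\psi(\hat z) - \nabla\psi(\hat z'), d\rangle \ge \mu\|d\|^2$) to bound the solution-dependent terms from below, and Cauchy--Schwarz on the reference term. After cancelling the common factor $\alpha > 0$ and one factor of $\|d\|$, this produces the Lipschitz-type estimate $\|\gM(\rho) - \gM(\rho')\| \le \mu^{-1}\,\|\nabla\psi(\rho) - \nabla\psi(\rho')\|$.

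Continuity then follows immediately: by hypothesis $\psi$ is continuously differentiable on $\operatorname{int}\operatorname{dom}\psi$, so $\nabla\psi$ is continuous there, and letting $\rho' \to \rho$ forces $\nabla\psi(\rho') \to \nabla\psi(\rho)$ and therefore $\gM(\rho') \to \gM(\rho)$; in fact the estimate shows $\gM$ is Lipschitz in the mirror coordinates, which is strictly stronger than continuity. I expect the main obstacle to be the sign bookkeeping in the combined VI inequality, so that the monotone $F$-term and the strongly-monotone $\nabla\psi$-term appear with the correct orientation to be discarded and retained respectively; once the identity $\nabla_z B_\psi(z;\rho) = \nabla\psi(z) - \nabla\psi(\rho)$ is in hand, isolating the reference-dependent term is routine. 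I would also confirm that $\hat z, \hat z' \in \operatorname{int}\operatorname{dom}\psi$ so that $\nabla\psi$ is defined at them, but this is guaranteed by the standing assumption in Definition~\ref{def:reg_vi_operator}.
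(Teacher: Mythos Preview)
Your proposal is correct and follows essentially the same route as the paper: both test the two VI conditions at each other's solutions, add, and use monotonicity of $F$ together with $\mu$-strong convexity of $\psi$ and Cauchy--Schwarz to obtain the identical quantitative bound $\|\gM(\rho)-\gM(\rho')\|\le \mu^{-1}\|\nabla\psi(\rho)-\nabla\psi(\rho')\|$, from which continuity follows by continuity of $\nabla\psi$. The only cosmetic difference is that the paper groups terms by adding and subtracting $G_{\rho'}(\hat z)$ and invokes strong monotonicity of $G_{\rho'}$ (Lemma~\ref{lem:strong-monotone}), whereas you expand $G_\rho$ explicitly and apply monotonicity of $F$ and strong convexity of $\psi$ separately; the algebra and final estimate are the same.
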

(Proof in Appendix~\ref{proof:continuity}.)

\subsection{Iterative Refinement toward Equilibrium}
\label{sec:iterative_refinement}

Motivated by the above properties, we propose the following procedure: starting from an arbitrary strategy profile, repeatedly apply the operator $\gM$ until convergence.

\begin{algorithm_def}[Iterative $\gM$ Method]
\label{alg:iterative_m}
Starting with $z_0 \in \operatorname{int}\operatorname{dom} \psi \cap \gZ$, at each iteration $t$ compute:
\begin{equation}
z_{t+1} = \gM(z_t)
\end{equation}
where we assume $z_{t} \in \operatorname{int}\operatorname{dom}\psi \cap \gZ$ for all iterations.
\end{algorithm_def}

The convergence guarantee follows from Lemmas~\ref{lem:distance_non_increase} and~\ref{lem:fixed_point} and the continuity of $\gM$ (Lemma~\ref{lem:continuity}). We show that each refinement step \emph{strictly} reduces the Bregman divergence to any Nash equilibrium until convergence is reached.

\begin{restatable}[Convergence of Iterative $\gM$]{theorem}{convergencetheorem}
\label{thm:convergence}
    Let $z^*$ be a Nash equilibrium with $z^* \in \operatorname{int}\operatorname{dom}\psi \cap \gZ$. Algorithm~\ref{alg:iterative_m} generates a sequence $\{z_t\}$ such that
    \begin{equation}
        B_\psi(z^*; z_t) > B_\psi(z^*; z_{t+1}),
    \end{equation}
    for all $t$ such that $z_t$ is not a Nash equilibrium. Moreover, $\{z_t\}$ converges to a Nash equilibrium, i.e., $\lim_{t \to \infty} z_t$ is a Nash equilibrium.
\end{restatable}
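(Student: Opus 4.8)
The plan is to run a Fejér-monotonicity / Opial-type argument built entirely from the three lemmas already established. The strict-decrease claim is the easy part: writing $z_{t+1} = \gM(z_t)$, Lemma~\ref{lem:distance_non_increase} gives $B_\psi(z^*; z_t) \ge B_\psi(z^*; z_{t+1}) + B_\psi(z_{t+1}; z_t)$. If convergence has not yet occurred at step $t$, then $z_t$ is not a fixed point of $\gM$, so $z_{t+1} \ne z_t$; by $\mu$-strong convexity of $\psi$ the Bregman divergence is strictly positive off the diagonal, hence $B_\psi(z_{t+1}; z_t) > 0$ and the strict inequality $B_\psi(z^*; z_t) > B_\psi(z^*; z_{t+1})$ follows immediately.

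Next I would extract the analytic consequences of this monotonicity. Since $\{B_\psi(z^*; z_t)\}$ is nonincreasing and bounded below by $0$, it converges; telescoping the non-increase inequality yields $\sum_{t\ge 0} B_\psi(z_{t+1}; z_t) \le B_\psi(z^*; z_0) < \infty$, so $B_\psi(z_{t+1}; z_t) \to 0$. Applying the strong-convexity lower bound $B_\psi(z_{t+1}; z_t) \ge \frac{\mu}{2}\|z_{t+1} - z_t\|^2$ then gives $\|z_{t+1} - z_t\| \to 0$, i.e.\ the successive displacements vanish. Because $\gZ = \gX \times \gY$ is compact, the sequence $\{z_t\}$ admits a convergent subsequence $z_{t_k} \to \bar{z}$. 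I would now invoke continuity of $\gM$ (Lemma~\ref{lem:continuity}): $z_{t_k+1} = \gM(z_{t_k}) \to \gM(\bar{z})$, while the vanishing displacements force $z_{t_k+1} \to \bar{z}$ as well; comparing limits gives $\gM(\bar{z}) = \bar{z}$, so by the fixed-point characterization (Lemma~\ref{lem:fixed_point}) the limit point $\bar{z}$ is a Nash equilibrium.

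The hard part is upgrading this subsequential convergence to convergence of the entire sequence, and the decisive idea is to reapply Lemma~\ref{lem:distance_non_increase} with the limit point $\bar{z}$ itself playing the role of the Nash equilibrium $z^*$. Since $\bar{z}$ is a Nash equilibrium, $\{B_\psi(\bar{z}; z_t)\}$ is a nonincreasing (hence convergent) sequence; along the subsequence, continuity of $B_\psi(\bar{z}; \cdot)$ together with $B_\psi(\bar{z}; \bar{z}) = 0$ gives $B_\psi(\bar{z}; z_{t_k}) \to 0$, and a monotone sequence with a subsequence tending to $0$ must itself tend to $0$. Thus $B_\psi(\bar{z}; z_t) \to 0$, and strong convexity once more yields $\|z_t - \bar{z}\| \to 0$, so the full sequence converges to the Nash equilibrium $\bar{z}$.

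The main technical obstacle I anticipate is domain bookkeeping: Lemmas~\ref{lem:distance_non_increase} and~\ref{lem:continuity} are applied at $\bar{z}$, which requires $\bar{z}$ and the iterates to lie in $\operatorname{int}\operatorname{dom}\psi \cap \gZ$, whereas a priori a subsequential limit could approach the boundary where $\gM$ and the Bregman divergence need not be well behaved. I would handle this by leaning on the standing assumption of Algorithm~\ref{alg:iterative_m} and Definition~\ref{def:reg_vi_operator} that the relevant points remain in $\operatorname{int}\operatorname{dom}\psi \cap \gZ$; verifying (or assuming) that the limit point stays in this set is the one place where care is genuinely needed, as the rest of the argument is a routine assembly of monotonicity, compactness, and continuity.
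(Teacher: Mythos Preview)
Your proposal is correct and follows essentially the same Fej\'er-monotonicity/Opial-type argument as the paper: telescope Lemma~\ref{lem:distance_non_increase} to get $B_\psi(z_{t+1};z_t)\to 0$, extract a convergent subsequence, use continuity of $\gM$ to identify the limit as a fixed point (hence a Nash equilibrium by Lemma~\ref{lem:fixed_point}), and then reapply Lemma~\ref{lem:distance_non_increase} with that limit point to upgrade to full-sequence convergence. The only cosmetic differences are that the paper extracts the subsequence from a bounded Bregman ball rather than invoking compactness of $\gZ$ directly, and it argues $B_\psi(\gM(z_\infty);z_\infty)=0$ via continuity of the Bregman divergence rather than your slightly cleaner route of comparing the two limits of $z_{t_k+1}$; your flagged boundary/domain concern is real and the paper handles it exactly as you suggest, by leaning on the standing interiority assumption.
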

(Proof in Appendix~\ref{proof:convergence}.)

To implement this iterative method, we employ MMD to solve each regularized VI subproblem. Under Assumption~\ref{assump:mmd} and the existence of a Nash equilibrium $z^* \in \operatorname{int}\operatorname{dom}\psi \cap \gZ$, this yields a theoretically well-justified algorithm for finding Nash equilibria in two-player zero-sum games (Algorithm~\ref{alg:mmd_iterative_m}). This multi-round structure allows $\alpha$ to be set to large values that easily satisfy the convergence constraint $\alpha \geq \mu \eta L^2$, while still guaranteeing convergence to a Nash equilibrium. We note that the outer loop of Algorithm~\ref{alg:mmd_iterative_m} is an instance of the classical Bregman proximal-point method for monotone VIs~\citep{eckstein1993nonlinear, censor1998interior} where each outer step implicitly regularizes $F$ with $B_\psi(\cdot;\,z_t)$, converting VI($\gZ$, $F$) into a strongly-monotone subproblem solved by MMD.

\setcounter{algorithm}{2}
\begin{algorithm}[H]
\caption{Iterative MMD (\textsc{IMMD})}
\label{alg:mmd_iterative_m}
\begin{algorithmic}[1]
\State Initialize $x_0 \in \operatorname{int}\operatorname{dom} \psi_1 \cap \gX$
\State Initialize $y_0 \in \operatorname{int}\operatorname{dom} \psi_2 \cap \gY$
\State Set $\rho_1 \leftarrow x_0$, $\rho_2 \leftarrow y_0$
\For{$t = 0, 1, 2, \ldots$ until convergence}
    \State Set $x_{t,0} \leftarrow \rho_1$, $y_{t,0} \leftarrow \rho_2$
    \For{$k = 0, 1, 2, \ldots$ until convergence}
        \State $x_{t,k+1} \leftarrow \argmax_{x \in \gX} \left\{ \eta \left(\langle \nabla_{x_{t,k}}f(x_{t,k}, y_{t,k}), x \rangle - \alpha B_{\psi_1}(x;\rho_1) \right) - B_{\psi_1}(x;x_{t,k}) \right\}$
        \State $y_{t,k+1} \leftarrow \argmin_{y \in \gY} \left\{ \eta \left(\langle \nabla_{y_{t,k}}f(x_{t,k}, y_{t,k}), y \rangle + \alpha B_{\psi_2}(y;\rho_2) \right) + B_{\psi_2}(y;y_{t,k}) \right\}$
    \EndFor
    \State Update $\rho_1 \leftarrow x_{t,K}$, $\rho_2 \leftarrow y_{t,K}$ \Comment{where $K$ is the final inner iteration}
\EndFor
\end{algorithmic}
\end{algorithm}

\subsection{Practical Implementation: Nash Policy Gradient Algorithm}
\label{sec:practical_implementation}

Although Algorithm~\ref{alg:mmd_iterative_m} provides theoretical guarantees, its reliance on mixed strategies makes it unclear how to mitigate the exponential complexity in the size of the game tree. To address this computational barrier, we develop a practical approximation that preserves the iterative refinement structure while enabling efficient implementation in RL frameworks.

\paragraph{From Mixed Strategies to Behavioral Strategies.}
We illustrate the transformation for player 1; the case for player 2 is symmetric. Consider the mirror-ascent subproblem in line 7 of Algorithm~\ref{alg:mmd_iterative_m}. Setting $\psi_1$ to the negative entropy (the canonical choice on the probability simplex) shows that, for sufficiently small step size $\eta$, the mirror step is first-order equivalent to a natural gradient ascent step on the regularized objective (see Appendix~\ref{app:first_order_equiv} for the derivation):
\begin{equation}
\label{eq:regularized_objective_mixed}
g(x_{t,k}) = f(x_{t,k}, y_{t,k}) - \alpha \KL(x_{t,k} \,\|\, \rho_1).
\end{equation}
This mixed-strategy formulation inspired a behavioral strategy analogue. For behavioral strategy profile $\pi = (\pi^{(1)}, \pi^{(2)})$ and reference behavioral strategy profile $\rho = (\rho^{(1)}, \rho^{(2)})$, we design the analogous objective:
\begin{equation}
\label{eq:regularized_objective_behavioral}
    g(\pi^{(1)}) = \mathbb{E}_{\tau \sim \pi}[R_1(\tau)] - \alpha \, \mathbb{E}_{o \sim \pi}\Big[\KL\big(\pi^{(1)}(\cdot \mid o) \,\|\, \rho^{(1)}(\cdot \mid o)\big)\Big]
\end{equation}
where $R_1(\tau)$ denotes player 1's payoff along trajectory $\tau$ (a path of actions to a terminal node), $o$ denotes an observation (information set) of player 1, and both trajectories $\tau$ and observations $o$ are sampled under the strategy profile $\pi$. The first term preserves the expected payoff structure, while the regularization term aggregates Kullback--Leibler (KL) divergences across all observations. This design choice for regularization enables efficient estimation within RL frameworks, though it introduces a heuristic element: the expectation over $o$ is taken under the joint policy $\pi$, so the weight placed on each information set depends on the opponent's strategy $\pi^{(2)}$ and on $\pi^{(1)}$ itself through the induced visitation distribution.

\paragraph{Connection to Partially Observable MDPs.}
The shift to behavioral strategies allows a crucial reduction to single-agent RL. When opponent's strategy $\pi^{(-i)}$ is fixed, the game reduces to a Partially Observable Markov Decision Process (POMDP) from player $i$'s perspective~\citep{greenwald2013solving}. In this view, a behavioral strategy is equivalent to a stochastic policy in a POMDP. Thus, we can leverage any single-agent policy gradient method to optimize the regularized objective in Equation~\ref{eq:regularized_objective_behavioral}. Specifically, the policy and KL regularization gradients can both be estimated via trajectory sampling.

\paragraph{Practical Algorithm.}
These insights lead to our main algorithmic contribution, \emph{Nash Policy Gradient} (\textsc{NashPG}) presented in Algorithm~\ref{alg:nash_pg}. At each iteration, trajectories are collected under the current joint policy. Both players then use these trajectories to estimate their policy and KL regularization gradients, and perform regularized policy gradient updates. After $K$ such updates, the reference policies are reset to the updated policies, and this process is repeated for $T$ outer iterations. The parameters $K$ and $T$ can be chosen according to the available computational budget.

A key advantage of \textsc{NashPG} is its modularity, as it is agnostic to the choice of policy gradient estimator. For instance, when adopting PPO~\citep{DBLP:journals/corr/SchulmanWDRK17}, we simply replace the vanilla policy gradient update with PPO's clipped surrogate objective, while leaving the regularization term unchanged. This design allows \textsc{NashPG} to directly inherit algorithmic advances from the single-agent RL literature.

\begin{algorithm}[H]
\caption{Nash Policy Gradient (\textsc{NashPG})}
\label{alg:nash_pg}
\begin{algorithmic}[1]
\State Initialize policy parameters $\{\theta^{(i)}\}_{i=1}^2$ for policies $\{\pi_{\theta^{(i)}}\}_{i=1}^2$
\State Set reference policies $\rho^{(i)} \gets \pi_{\theta^{(i)}}$ for $i \in \{1,2\}$
\For{$t = 0, 1, \ldots, T-1$} \Comment{Outer loop: refining reference policies}
    \For{$k = 0, 1, \ldots, K-1$} \Comment{Inner loop: regularized policy gradient}
        \State Sample trajectories $\{\tau\}$ by executing $\pi = (\pi_{\theta^{(1)}}, \pi_{\theta^{(2)}})$ in the environment
        \For{each player $i \in \{1,2\}$}
            \State Estimate policy gradient:
            $
            \hat{g}^{(i)} \approx \nabla_{\theta^{(i)}} \E_{\tau \sim \pi}[R_i(\tau)]
            $
            \State Estimate regularization gradient:
            $
            \hat{g}^{(i)}_{\mathrm{reg}} \approx \nabla_{\theta^{(i)}} \E_{o \sim \pi}
            \big[\KL(\pi_{\theta^{(i)}}(\cdot \mid o) \,\|\, \rho^{(i)}(\cdot \mid o))\big]
            $
            \State Update player $i$:
            $
            \theta^{(i)} \gets \theta^{(i)} + \eta \big(\hat{g}^{(i)} - \alpha \hat{g}^{(i)}_{\mathrm{reg}}\big)
            $
        \EndFor
    \EndFor
    \State Update reference policies: $\rho^{(i)} \gets \pi_{\theta^{(i)}}$ for $i \in \{1,2\}$
\EndFor
\end{algorithmic}
\end{algorithm}

\paragraph{Remark.} Unlike \textsc{IMMD}, \textsc{NashPG} does not carry a formal convergence guarantee. The approximations introduced above (using negative entropy $\psi$ where the Nash equilibria may all lie on the simplex boundary, violating $z^* \in \operatorname{int}\operatorname{dom}\psi \cap \gZ$; replacing the mixed-strategy Bregman divergence with a behavioral-strategy KL term; and using a finite inner loop) break the theoretical chain from Section~\ref{sec:iterative_refinement}. \textsc{NashPG} is therefore a practical approximation of \textsc{IMMD} that trades formal convergence guarantees for computational tractability.

\section{Experiments}
We organize our empirical evaluation around four questions. \textbf{(Q1)} Does \textsc{NashPG} converge toward a Nash equilibrium in practice? \textbf{(Q2)} How sensitive is \textsc{NashPG} to the regularization strength? \textbf{(Q3)} How does \textsc{NashPG} compare with prior model-free methods as game size increases? \textbf{(Q4)} How does the choice of inner update rule affect performance within the same multi-round regularization framework? Complete experimental settings and additional studies are deferred to Appendix~\ref{app:exp_details}. All code needed to reproduce our experiments is publicly available at \url{https://github.com/ntu-agents/nashpg}.

\subsection{Experimental Setup}\label{subsec:expsetup}

\textbf{Environments}. We evaluate on seven two-player zero-sum imperfect-information games spanning several orders of magnitude in complexity: Kuhn Poker, Leduc Poker, abrupt Dark Hex $3\times3$ (ADH3), abrupt Phantom Tic-Tac-Toe (APTTT), Liar's Dice, Battleship, and Heads-Up No-Limit Texas Hold'em (NLHE). For Kuhn Poker through APTTT, we follow the standard OpenSpiel~\citep{LanctotEtAl2019OpenSpiel} rules. For Liar's Dice, we use the standard setting with 5 dice and 6 faces. Battleship follows the standard two-player rules. For Heads-Up NLHE, we use initial stacks of \$200 and a discretized action space with 16 actions per decision point: fold, check/call, 12 raise sizes, and all-in. Table~\ref{tab:env_summary} summarizes the scale of these environments; full rules and implementation details are deferred to Appendix~\ref{app:envs}.

\begin{table}[H]
\centering
\begin{tabular}{lcc}
\toprule
Environment & Information states & Evaluation regime \\
\midrule
Kuhn Poker & $12$ & Exact exploitability \\
Leduc Poker & $936$ & Exact exploitability \\
ADH3 & $\sim 2.733 \times 10^7$ & Exact exploitability \\
APTTT & $\sim 2.331 \times 10^7$ & Exact exploitability \\
Liar's Dice & $\ge 2.9 \times 10^{20}$ & Approx.\ BR / head-to-head \\
Battleship & $\ge 2.9 \times 10^{35}$ & Approx.\ BR / head-to-head \\
Heads-Up NLHE & $\ge 1.3\times 10^{20}$ & Approx.\ BR / head-to-head \\
\bottomrule
\end{tabular}
\caption{Summary of evaluated environments and their approximate complexity.}
\label{tab:env_summary}
\end{table}

\textbf{Algorithms}. We compare \textsc{NashPG} against four model-free baselines: NFSP~\citep{DBLP:journals/corr/HeinrichS16}, PSRO~\citep{DBLP:conf/nips/LanctotZGLTPSG17}, R-NaD~\citep{perolat2022mastering}, and MMD~\citep{DBLP:conf/iclr/SokotaDKLLMBK23}. NFSP and PSRO train their best-response policies using PPO. We standardize all shared optimization hyperparameters, including the learning rate, entropy coefficient, and rollout length. To ensure a fair comparison, we also align training budgets across methods wherever applicable. For all environments except Kuhn Poker, we use $10{,}000$ inner-loop updates and $25$ outer-loop iterations; for MMD, this corresponds to $250{,}000$ total updates. For Kuhn Poker, we use $2{,}500$ inner-loop updates and $20$ outer-loop iterations for all methods. Detailed algorithmic settings are deferred to Appendix~\ref{app:algs}.

\textbf{Model Architectures}. Within each environment, all algorithms share the same policy and value-network architecture. Depending on the observation structure, the feature extractor can be a multi-layer perceptron, transformer, or convolutional neural network with hidden dimensions ranging from 128 to 512. Full details are provided in Appendix~\ref{app:evas}.

\textbf{Evaluation}. Our primary metric is exploitability. For Kuhn Poker, Leduc Poker, ADH3, and APTTT, the games are small enough that exploitability can be computed exactly. For Liar's Dice, Battleship, and Heads-Up NLHE, exact exploitability is intractable, we instead report \emph{approximate exploitability}, computed via a PPO best-response agent trained against the learned policy for 10{,}000 update steps. The BR agent shares the same model architecture as the policy agents, and all algorithms use the same BR training budget for fairness. Because the RL-trained BR is not guaranteed to reach the true best-response value, the resulting approximate exploitability is a \emph{lower bound} on true exploitability; its tightness depends on the quality of the RL training. We therefore complement it with head-to-head matches, where the final policy of each baseline plays against \textsc{NashPG} and we report the average payoff of \textsc{NashPG} over 1{,}024 games. All results are averaged over 5 random seeds and reported as mean~$\pm$~standard deviation. Full evaluation details are deferred to Appendix~\ref{app:evas}.

\textbf{Implementation}. All methods and environments are implemented in JAX~\citep{jax2018github}. To the best of our knowledge, we provide the first JAX implementations for ADH3, APTTT, Liar's Dice, Battleship, and Heads-Up NLHE, while Kuhn Poker and Leduc Poker are already available in JAX through prior work~\citep{koyamada2023pgx}. Additional implementation details, including the exact exploitability pipeline and solver interfaces, are provided in the Appendix~\ref{app:exp_details}.

\subsection{Convergence and Sensitivity to Regularization}

We begin by studying how the regularization strength affects practical convergence under a fixed compute budget. Since Kuhn Poker and Leduc Poker are small enough to allow exact exploitability evaluation and fast training, we compare \textsc{NashPG}, R-NaD, and MMD across a range of regularization strengths. Figure~\ref{fig:q1_reg_sweep} reports final exploitability as a function of the regularization coefficient, while Figure~\ref{fig:q1_nashpg_curves} shows the full exploitability trajectories of \textsc{NashPG} under different regularization strengths.

\begin{figure}[H]
    \centering
    \begin{minipage}[t]{0.48\textwidth}
        \centering
        \includegraphics[width=\textwidth]{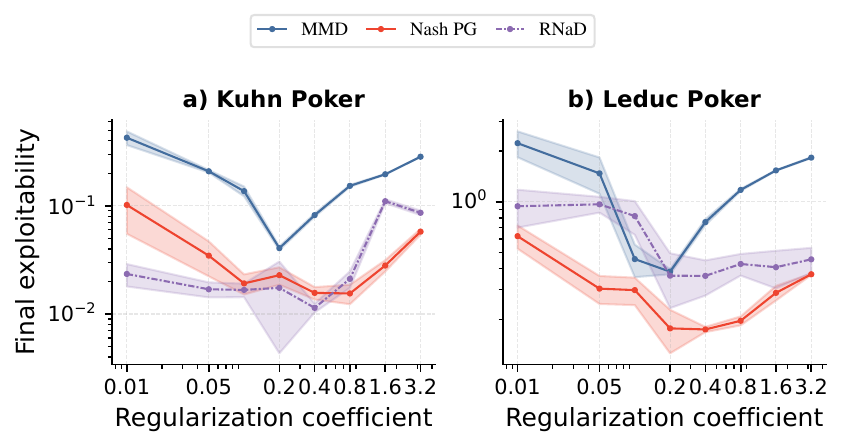}
        \caption{Final exploitability as a function of regularization strength for \textsc{NashPG}, R-NaD, and MMD on Kuhn Poker and Leduc Poker.}
        \label{fig:q1_reg_sweep}
    \end{minipage}
    \hfill
    \begin{minipage}[t]{0.48\textwidth}
        \centering
        \includegraphics[width=\textwidth]{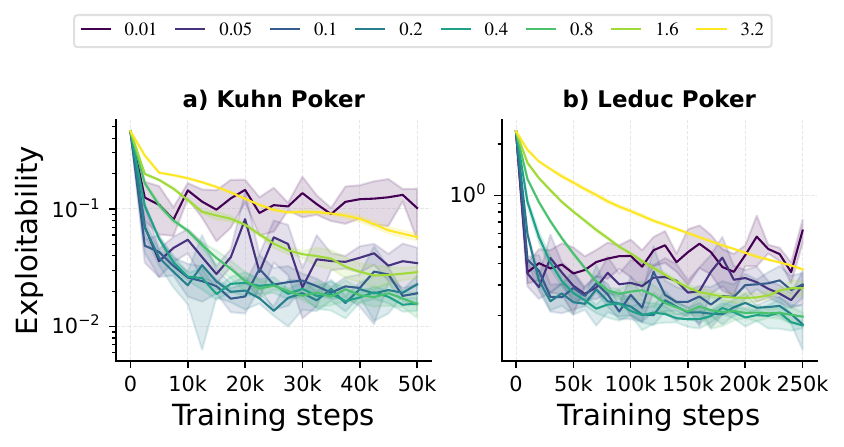}
        \caption{Exploitability trajectories of \textsc{NashPG} under different regularization strengths.}
        \label{fig:q1_nashpg_curves}
    \end{minipage}
\end{figure}

Across all three methods, performance exhibits a broadly U-shaped dependence on the regularization strength. When the coefficient is too small, the stabilizing effect of regularization is insufficient and stochastic optimization noise degrades convergence. When the coefficient is too large, performance also deteriorates, but for different reasons across methods. For MMD, large regularization moves the solution farther from the Nash equilibrium. For \textsc{NashPG} and R-NaD, by contrast, large regularization primarily slows refinement, leaving the methods stable, but converge more slowly within a fixed compute budget. Figure~\ref{fig:q1_nashpg_curves} illustrates this tradeoff for \textsc{NashPG} and shows that \textsc{NashPG} exhibits no sign of divergence at $\alpha \ge 0.2$ in both games. Moreover, Figure~\ref{fig:q1_reg_sweep} shows that $\alpha = 0.2$ performs well across both games for all three methods, we therefore use $\alpha = 0.2$ in all subsequent experiments.

\subsection{Comparison Across Game Sizes}

We next evaluate how \textsc{NashPG} compares with prior model-free methods as game size increases. Across all environments considered, we do not observe divergence in \textsc{NashPG}, which is encouraging evidence for its practical stability.

Figure~\ref{fig:q2_small_games} reports exploitability on the smaller benchmark games where exact evaluation is available, and Figure~\ref{fig:q2_large_exploitability} reports approximate exploitability on the larger domains. In the exact-evaluation games, \textsc{NashPG} consistently reaches low exploitability and is competitive with, or superior to, the other model-free methods. The only exception is Kuhn Poker, where its performance is comparable rather than clearly better.

Population-based methods, NFSP and PSRO, often converge more slowly than \textsc{NashPG}, and in some environments fail to match the final performance of the regularization-based methods. This is unsurprising, since population-based methods must enumerate all pure strategies in the worst case, which grow exponentially in the size of the game~\citep{DBLP:conf/nips/LanctotZGLTPSG17}.

\begin{figure}[H]
    \centering
    \includegraphics[width=\textwidth]{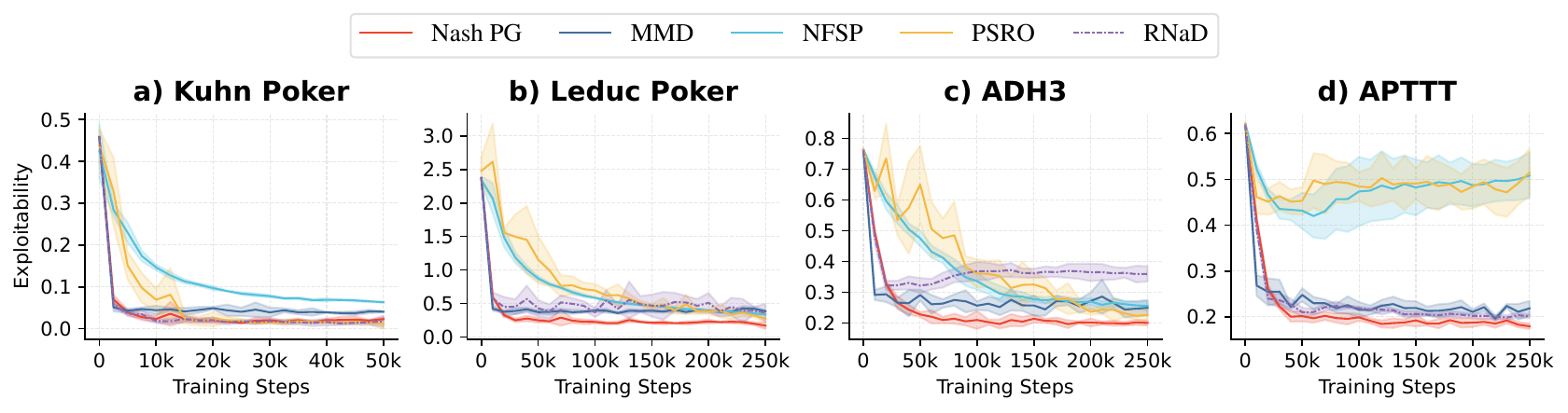}
    \caption{Exploitability on games where exact evaluation is available. Error bars denote standard deviation over 5 seeds.}
    \label{fig:q2_small_games}
\end{figure}

Relative to MMD, \textsc{NashPG} typically achieves lower final exploitability in our experiments, which is consistent with the interpretation of \textsc{NashPG} as an iterative refinement procedure that progressively removes the bias induced by a fixed regularization reference. R-NaD often performs comparably to \textsc{NashPG} early in training but can later become trapped in a suboptimal regime, which motivates the controlled study in the next subsection (Section~\ref{subsec:inner_update}).

\begin{table}[H]
\centering
\begin{tabular}{lcccc}
\toprule
\textbf{Environment} & \textbf{vs MMD} & \textbf{vs NFSP} & \textbf{vs PSRO} & \textbf{vs RNaD} \\
\midrule
Liar's Dice    & $+0.008 \pm 0.035$$^\dagger$ & $+0.119 \pm 0.050$ & $+0.153 \pm 0.046$ & $+0.163 \pm 0.041$ \\
Heads-Up Poker  & $+0.111 \pm 0.023$ & $+0.163 \pm 0.012$ & $+0.149 \pm 0.015$ & $+0.104 \pm 0.064$ \\
Battleship     & $+0.004 \pm 0.007$$^\dagger$ & $+0.174 \pm 0.017$ & $+0.092 \pm 0.008$ & $+0.107 \pm 0.008$ \\
\bottomrule
\end{tabular}
\caption{Head-to-head average payoff of \textsc{NashPG} against each baseline ($\pm$ std over 5 runs). Positive values indicate \textsc{NashPG} wins. $^\dagger$Cell is within
one standard deviation of zero.}
\label{tab:q2_head_to_head}
\end{table}

\begin{figure}[H]
    \centering
    \includegraphics[width=\textwidth]{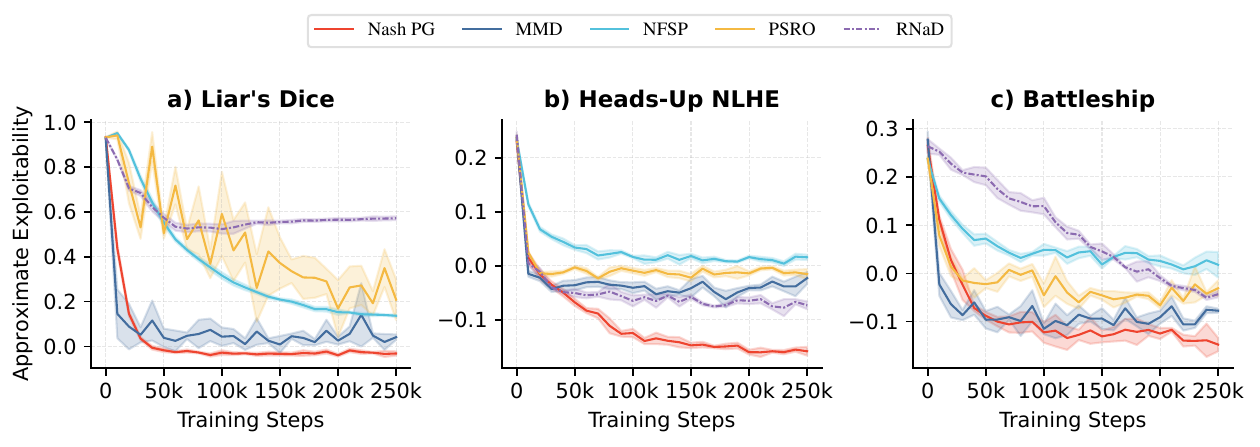}
    \caption{Approximate exploitability in large games. Error bars denote standard deviation over 5 seeds.}
    \label{fig:q2_large_exploitability}
\end{figure}

In the large games, exploitability is only approximate by training PPO best-response agents. We therefore complement approximate exploitability with head-to-head matches against \textsc{NashPG} to measure relative strength of each baseline. Table~\ref{tab:q2_head_to_head} shows that \textsc{NashPG} achieves positive average payoff against all competing methods across these domains. Two cells (Liar's Dice and Battleship vs.\ MMD, marked $^\dagger$) fall within one standard deviation of zero, indicating the gap is not statistically reliable; \textsc{NashPG} is nonetheless at least comparable to MMD in these settings. Moreover, the ordering induced by the head-to-head results broadly agrees with the ordering suggested by Figure~\ref{fig:q2_large_exploitability}, which provides some evidence that approximate exploitability evaluation remains informative.

Overall, these results suggest that \textsc{NashPG} remains stable across a wide range of game sizes and offers a favorable combination of convergence speed and final performance.

\subsection{Does a More Mature Inner Update Rule Improve Multi-Round Regularization?}\label{subsec:inner_update}

Our introduction (Section~\ref{sec:intro}) hypothesized that the weak empirical performance of R-NaD may stem from its inner optimization rule rather than from the regularization framework itself. In particular, R-NaD relies on NeuRD~\citep{hennes2020neural}, whereas \textsc{NashPG} uses PPO-style policy optimization, which benefits from a substantially more mature set of algorithmic stabilizations. To test this hypothesis, we construct a controlled variant of R-NaD in which we keep the reward transformation and outer-loop regularization structure of R-NaD, but replace NeuRD with PPO as the inner optimizer. We denote this variant by RNaD-PPO.

\begin{figure}[H]
    \centering
    \includegraphics[width=\textwidth]{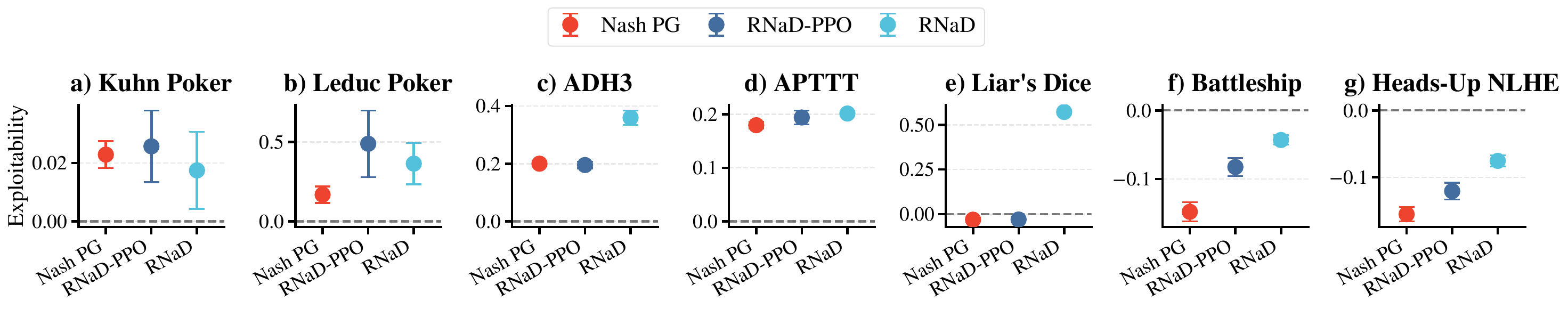}
    \caption{Effect of replacing NeuRD with PPO within the R-NaD framework.}
    \label{fig:q3_inner_update}
\end{figure}

Figure~\ref{fig:q3_inner_update} reveals a more nuanced pattern. In the smallest games, including Kuhn Poker and Leduc Poker, replacing NeuRD with PPO can hurt performance. In this regime, optimization is already sufficiently stable that the theoretical structure of the update appears to matter more than additional reinforcement-learning heuristics. In contrast, as the environments become larger and optimization becomes more challenging, RNaD-PPO substantially improves over R-NaD. Replacing NeuRD with PPO dramatically narrows the gap between R-NaD and \textsc{NashPG}, and in some cases yields performance comparable to \textsc{NashPG}. This provides evidence that, in larger and more challenging games, the practical bottleneck in R-NaD lies in its inner update rule rather than in the regularization framework itself, and that the performance gap in these settings is primarily attributable to using NeuRD rather than PPO as the inner optimizer.

\subsection{Discussion}

Taken together, the results offer encouraging evidence on all four questions. First, \textsc{NashPG} exhibits stable practical convergence when the regularization strength is chosen appropriately. Second, it compares favorably with prior model-free methods across a broad range of game sizes. Third, the controlled ablation on R-NaD supports our central hypothesis: the practical success of sample-based multi-round regularization depends crucially on the inner policy update rule. Additionally, \textsc{NashPG} is straightforward to implement, our JAX implementation of \textsc{NashPG} differs from Independent PPO~\citep{marl-book,DBLP:journals/corr/SchulmanWDRK17} implementation by only about 20 lines of code, making it easy to adopt within existing codebases.

\section{Conclusion and Future Work}
We proposed \textsc{IMMD}, a regularization-based framework that achieves convergence to Nash equilibria in two-player zero-sum imperfect-information games by iteratively refining the reference policy. We proved strictly monotonic improvement of the solution over iterations, and established convergence guarantees to Nash equilibria. Inspired by these theoretical findings, we developed \textsc{NashPG}, a practical algorithm that closely mirrors standard policy gradient methods while adding a KL regularization term and periodically updated reference policy. Empirically, \textsc{NashPG} achieves low exploitability and scales effectively to large domains such as \textit{Battleship} and \textit{Heads-Up No-Limit Texas Hold'em}. A central appeal of \textsc{NashPG} is its accessibility, as the method requires only minor modifications to standard policy gradient algorithms, lowering the barrier to entry for practitioners in MARL. 

Several directions remain open for future work. On the theoretical side, while \textsc{IMMD} is fully grounded by our analysis, some steps used to derive the practical \textsc{NashPG} objective from the mixed-strategy formulation remain heuristic (Section~\ref{sec:practical_implementation}) and would benefit from a more principled justification. A further theoretical direction is to strengthen the convergence result for \textsc{IMMD} to a finite-time bound; we hypothesize that the regularization strength $\alpha$ will appear naturally in such a bound, which could provide a principled explanation for the U-shaped dependence on $\alpha$ observed empirically in Figure~\ref{fig:q1_reg_sweep}. Another natural direction is to extend the framework beyond 2p0s games to general games settings, where equilibrium selection becomes an additional challenge~\citep{christianos2023pareto}. Improving the sample efficiency of self-play training is another open direction; approaches based on intrinsic rewards~\citep{schaefer2022derl} could accelerate convergence. On the empirical side, benchmarking \textsc{NashPG} against domain-specific state-of-the-art methods, such as Pluribus~\citep{doi:10.1126/science.aay2400}, in large poker settings would further clarify its strengths. It would also be valuable to evaluate \textsc{NashPG} more broadly in additional domains, including continuous-action games and other high-dimensional environments, to better understand its practical range.

\subsubsection*{Broader Impact Statement}
This work develops theoretical and algorithmic advances for finding Nash equilibria in two-player zero-sum imperfect-information games. The primary applications are in game-solving and multi-agent reinforcement learning research. We do not foresee direct negative societal impacts beyond those associated with general advances in AI.

\bibliography{main}
\bibliographystyle{tmlr}

\appendix
\section{Theoretical Analysis}

In this appendix, we provide detailed proofs for the main theoretical results presented in Section~\ref{sec:reg_vi_operator} and Section~\ref{sec:iterative_refinement}. We begin by establishing the mathematical foundation and recall the problem setup.

\subsection{Mathematical Notation}
\label{app:notation}

For a function $f$, we denote its domain by $\operatorname{dom} f$ and for a set $\mathcal{D}$, we write $\operatorname{int} \mathcal{D}$ for its interior.

\begin{definition}[$L$-smooth function]
A differentiable function $f: \mathcal{Z} \to \mathbb{R}$ is $L$-smooth on $\mathcal{Z} \subseteq \mathbb{R}^n$ if its gradient is $L$-Lipschitz continuous:
\[
\|\nabla f(z) - \nabla f(z')\| \leq L \|z - z'\| 
\quad \forall z, z' \in \mathcal{Z}.
\]
\end{definition}

\begin{definition}[$\mu$-strong convex function]
A differentiable function $f: \mathcal{Z} \to \mathbb{R}$ is $\mu$-strongly convex on $\mathcal{Z}$ if there exists $\mu > 0$ such that
\[
f(z') \geq f(z) + \langle \nabla f(z), z' - z \rangle 
+ \tfrac{\mu}{2}\|z' - z\|^2 
\quad \forall z, z' \in \mathcal{Z}.
\]
\end{definition}

\begin{definition}[Monotone operator]
An operator $G: \mathcal{Z} \to \mathbb{R}^n$ is monotone on $\mathcal{Z}$ if
\[
\langle G(z) - G(z'), z - z' \rangle \geq 0 
\quad \forall z, z' \in \mathcal{Z}.
\]
It is $\mu$-strongly monotone on $\mathcal{Z}$ if there exists $\mu > 0$ such that
\[
\langle G(z) - G(z'), z - z' \rangle \geq \mu \|z - z'\|^2 
\quad \forall z, z' \in \mathcal{Z}.
\]
\end{definition}

\begin{definition}[Bregman divergence]
Let $\psi: \mathcal{Z} \to \mathbb{R}$ be a differentiable, strongly convex function.  
For $z \in \operatorname{dom} \psi$ and $z' \in \operatorname{int} \operatorname{dom} \psi$, the Bregman divergence is defined as
\[
B_\psi(z; z') = \psi(z) - \psi(z') - \langle \nabla \psi(z'), z - z' \rangle,
\]
which satisfies $B_\psi(z; z') \geq 0$.
\end{definition}

\subsection{Problem Setup}
\label{app:setup}

\paragraph{Two-Player Zero-Sum Games.}  
We consider two-player zero-sum extensive-form games with perfect recall. Their equivalent normal-form representation is given by a payoff matrix $\mA$, where rows and columns correspond to the pure strategies of players 1 and 2. Let $\gX$ and $\gY$ denote the mixed strategy spaces (probability simplices) of the two players, and define the strategy profile space $\gZ = \gX \times \gY$. A strategy profile is $z = (x,y) \in \gZ$, and the payoff to player 1 is
\[
f(x,y) = x^\top \mA y.
\]

\paragraph{Nash Equilibrium.}  
A strategy profile $z^* = (x^*, y^*) \in \gZ$ is a Nash equilibrium if neither player can improve unilaterally:
\[
x^* \in \arg\max_{x \in \gX} f(x, y^*) 
\quad \text{and} \quad 
y^* \in \arg\max_{y \in \gY} -f(x^*, y).
\]

\paragraph{Variational Inequality Formulation.}  
The Nash equilibrium problem can be equivalently expressed as a variational inequality (VI). Define the operator $F: \gZ \to \mathbb{R}^n$ by
\begin{equation}
F(z) = \begin{bmatrix} -\nabla_x f(x,y) \\[4pt] \nabla_y f(x,y) \end{bmatrix} 
= \begin{bmatrix} -\mA y \\ \mA^\top x \end{bmatrix}.
\end{equation}
Then $z^*$ is a Nash equilibrium if and only if it solves $\mathrm{VI}(\gZ, F)$:
\[
\langle F(z^*), z - z^* \rangle \geq 0 \quad \forall z \in \gZ.
\]

\paragraph{Regularized Variational Inequality.}  
With a strongly convex regularizer $\psi(z) = \psi_1(x) + \psi_2(y)$ with associated Bregman divergence
\[
B_\psi(z;\rho) = B_{\psi_1}(x;\rho_1) + B_{\psi_2}(y;\rho_2),
\]
where $\rho = (\rho_1,\rho_2) \in \gZ$ is a reference strategy profile. The regularized operator is defined as
\begin{equation}
G_\rho(z) = F(z) + \alpha \nabla_z B_\psi(z;\rho) \quad \alpha > 0,
\end{equation}
or equivalently
\begin{equation}
G_\rho(z) \;=\; F(z) + \alpha\big(\nabla\psi(z)-\nabla\psi(\rho)\big)
\quad \alpha>0.
\end{equation}
The corresponding regularized VI seeks $\hat{z} \in \gZ$ such that
\[
\langle G_\rho(\hat{z}), z - \hat{z} \rangle \geq 0 \quad \forall z \in \gZ.
\]

\paragraph{Regularized VI Operator.}  
The central object of our analysis is the mapping
\[
\gM : \operatorname{int} \operatorname{dom}\psi \cap \gZ \;\to\; \operatorname{int} \operatorname{dom}\psi \cap \gZ,
\]
which takes a reference profile $\rho$ to the unique solution $\gM(\rho)$ of $\mathrm{VI}(\gZ, G_\rho)$. Formally, $\gM(\rho)$ is the unique point satisfying
\[
\langle G_\rho(\gM(\rho)), z - \gM(\rho) \rangle \geq 0 \quad \forall z \in \gZ.
\]

\subsection{Preliminary Results}

\begin{lemma}[Three-Point Property] \label{lem:three_point}
For any strongly convex function $\psi$ and points $a \in \operatorname{dom}\psi$ and $\{b, c\} \subset \operatorname{int}\operatorname{dom}\psi$, the Bregman divergence satisfies:
\begin{equation}
B_\psi(a; b) = B_\psi(a; c) + B_\psi(c; b) + \langle \nabla \psi(c) - \nabla \psi(b), a - c \rangle
\end{equation}
\end{lemma}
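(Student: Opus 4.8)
The plan is to prove this purely as an algebraic identity by expanding every Bregman divergence via its definition $B_\psi(z; z') = \psi(z) - \psi(z') - \langle \nabla \psi(z'), z - z' \rangle$ and then checking that the right-hand side collapses onto the left-hand side. Since $b, c \in \operatorname{int}\operatorname{dom}\psi$, the gradients $\nabla\psi(b)$ and $\nabla\psi(c)$ are well-defined, so every term in the claimed equation is meaningful; strong convexity itself plays no role beyond ensuring differentiability at these interior points.

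First I would write out the two divergences appearing on the right-hand side:
\begin{align*}
B_\psi(a; c) &= \psi(a) - \psi(c) - \langle \nabla \psi(c), a - c \rangle, \\
B_\psi(c; b) &= \psi(c) - \psi(b) - \langle \nabla \psi(b), c - b \rangle.
\end{align*}
Summing these and then adding the correction term $\langle \nabla \psi(c) - \nabla \psi(b), a - c \rangle$, I would observe that the scalar terms $\pm\psi(c)$ cancel, and that the two occurrences of $\langle \nabla\psi(c), a - c\rangle$ (one from $B_\psi(a;c)$, one from the correction term) cancel as well. This leaves only $\psi(a)$, $-\psi(b)$, and the two inner products taken against $\nabla\psi(b)$.

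The final step is to combine these surviving inner products. Using bilinearity together with the telescoping identity $(c - b) + (a - c) = a - b$, I would merge $-\langle \nabla \psi(b), c - b \rangle$ and $-\langle \nabla \psi(b), a - c \rangle$ into the single term $-\langle \nabla \psi(b), a - b \rangle$. What remains is exactly $\psi(a) - \psi(b) - \langle \nabla \psi(b), a - b \rangle = B_\psi(a; b)$, which is the left-hand side, completing the proof.

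There is no genuine obstacle here: the result holds for any differentiable $\psi$ and is a standard decomposition of the Bregman divergence. The only point demanding care is bookkeeping of the signs on the inner-product cross terms, so that the cancellation of the $\nabla\psi(c)$ contributions and the telescoping of the $\nabla\psi(b)$ contributions both go through cleanly.
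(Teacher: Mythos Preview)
Your proposal is correct and follows essentially the same approach as the paper: expand each Bregman divergence via its definition and verify the identity by direct algebraic manipulation. The paper's proof merely writes out the three expansions and then says ``direct algebraic manipulation yields the result,'' whereas you spell out the cancellations explicitly.
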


\begin{proof}\label{proof:three_point}
By definition of Bregman divergence:
\begin{align}
B_\psi(a; b) &= \psi(a) - \psi(b) - \langle \nabla \psi(b), a - b \rangle \\
B_\psi(a; c) &= \psi(a) - \psi(c) - \langle \nabla \psi(c), a - c \rangle \\
B_\psi(c; b) &= \psi(c) - \psi(b) - \langle \nabla \psi(b), c - b \rangle
\end{align}
Direct algebraic manipulation yields the result.
\end{proof}

\begin{lemma}[Monotonicity of Operator $F$]
\label{lem:monotone_f}
The operator $F: \gZ \to \mathbb{R}^{n}$ defined in the problem setup is monotone on $\gZ$.
\end{lemma}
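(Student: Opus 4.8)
The plan is to exploit the explicit linear form of $F$ and show that the quadratic form governing monotonicity vanishes identically; this yields monotonicity with equality, consistent with the earlier remark that $F$ is monotone but not strongly monotone.

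First I would rewrite $F$ as a single linear map on $\gZ \subseteq \mathbb{R}^n$. Using the closed forms $-\nabla_x f(x,y) = -\mA y$ and $\nabla_y f(x,y) = \mA^\top x$, the operator becomes
\[
F(z) = \begin{bmatrix} -\mA y \\ \mA^\top x \end{bmatrix}
= \begin{bmatrix} 0 & -\mA \\ \mA^\top & 0 \end{bmatrix} \begin{bmatrix} x \\ y \end{bmatrix} =: M z,
\]
where the block matrix $M$ is skew-symmetric, i.e.\ $M^\top = -M$.

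Next, for arbitrary $z = (x,y)$ and $z' = (x',y')$ in $\gZ$, linearity gives $F(z) - F(z') = M(z - z')$, so the monotonicity inner product becomes $\langle F(z) - F(z'), z - z' \rangle = (z-z')^\top M (z-z')$. The key step is the standard fact that a skew-symmetric matrix induces a vanishing quadratic form: for any vector $u$, $u^\top M u = (u^\top M u)^\top = u^\top M^\top u = -u^\top M u$, forcing $u^\top M u = 0$. Taking $u = z - z'$ yields $\langle F(z) - F(z'), z - z' \rangle = 0 \geq 0$, which is exactly the monotonicity condition. Equivalently, one can expand directly: writing $\Delta x = x - x'$ and $\Delta y = y - y'$, the inner product equals $-\langle \mA \Delta y, \Delta x \rangle + \langle \mA^\top \Delta x, \Delta y \rangle$, and since each summand is a scalar, $\langle \mA^\top \Delta x, \Delta y \rangle = (\Delta x)^\top \mA \Delta y = \langle \mA \Delta y, \Delta x \rangle$, so the two terms cancel exactly.

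There is essentially no obstacle here: the claim follows immediately from the bilinear (skew-symmetric) structure of the zero-sum payoff. The only point worth flagging is that the inner product is \emph{exactly} zero rather than merely nonnegative, which is precisely why $F$ fails to be strongly monotone and motivates the Bregman regularization introduced earlier.
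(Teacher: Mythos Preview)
Your proof is correct and essentially the same as the paper's: both compute $\langle F(z)-F(z'), z-z'\rangle$ directly and show the two bilinear terms cancel to zero. Your skew-symmetric framing $F(z)=Mz$ with $M^\top=-M$ is a clean conceptual wrapper, but the underlying computation (your ``equivalently'' expansion with $\Delta x,\Delta y$) is exactly what the paper does.
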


\begin{proof}\label{proof:monotone_f}
Let $z_1 = (x_1, y_1)$ and $z_2 = (x_2, y_2)$ be two strategy profiles in $\gZ$. We have:
\begin{align}
&\langle F(z_1) - F(z_2), z_1 - z_2 \rangle \\
&= \langle -\mA y_1 + \mA y_2, x_1 - x_2 \rangle + \langle \mA^\top x_1 - \mA^\top x_2, y_1 - y_2 \rangle \\
&= -(x_1 - x_2)^\top \mA (y_1 - y_2) + (x_1 - x_2)^\top \mA (y_1 - y_2) \\
&= 0 \geq 0
\end{align}
where we used the definition $F(z) = [-\mA y; \mA^\top x]$ from the problem setup.
\end{proof}

\begin{lemma}[Strong Monotonicity of $G_\rho$]
\label{lem:strong-monotone}
Let $\psi$ be $\mu$-strongly convex on $\operatorname{int}\operatorname{dom}\psi$ with $\mu>0$, then $G_\rho$ is $\alpha\mu$-strongly monotone on $\gZ$, i.e.,
\[
\langle G_\rho(u)-G_\rho(v),\,u-v\rangle \;\ge\; \alpha\mu \|u-v\|^2 
\quad \text{for all } u,v\in\gZ.
\]
\end{lemma}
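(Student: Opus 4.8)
The plan is to decompose the difference $G_\rho(u)-G_\rho(v)$ into its two constituent pieces and bound each separately. First I would use the equivalent form of the regularized operator from the problem setup, $G_\rho(z) = F(z) + \alpha\big(\nabla\psi(z)-\nabla\psi(\rho)\big)$, and subtract the value at $v$ from the value at $u$. The crucial observation is that the reference term $\nabla\psi(\rho)$ is identical for both arguments and therefore cancels, leaving
\[
G_\rho(u)-G_\rho(v) = \big(F(u)-F(v)\big) + \alpha\big(\nabla\psi(u)-\nabla\psi(v)\big).
\]
Taking the inner product with $u-v$ then splits the quantity of interest into $\langle F(u)-F(v),\,u-v\rangle + \alpha\,\langle \nabla\psi(u)-\nabla\psi(v),\,u-v\rangle$.

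For the first term, I would invoke the monotonicity of $F$ established in Lemma~\ref{lem:monotone_f}, which gives $\langle F(u)-F(v),\,u-v\rangle \ge 0$; in fact for this bilinear payoff the term vanishes exactly, but nonnegativity is all that is required. For the second term, I would use the standard characterization that $\mu$-strong convexity of $\psi$ makes its gradient $\mu$-strongly monotone, i.e.\ $\langle \nabla\psi(u)-\nabla\psi(v),\,u-v\rangle \ge \mu\|u-v\|^2$. This inequality follows by writing the defining strong-convexity inequality for $\psi$ twice, once with the roles of $u$ and $v$ interchanged, and adding the two: the function values cancel and the squared-norm terms combine to give the stated bound.

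Combining the two estimates yields $\langle G_\rho(u)-G_\rho(v),\,u-v\rangle \ge 0 + \alpha\mu\|u-v\|^2$, which is precisely the claimed $\alpha\mu$-strong monotonicity. The argument is essentially mechanical, and there is no substantive obstacle: the two key facts it rests on are the cancellation of the reference gradient in the difference (which is why the modulus is independent of $\rho$) and the gradient-monotonicity form of strong convexity. The only point requiring minor care is that $u$ and $v$ must lie where $\nabla\psi$ is defined, i.e.\ in $\operatorname{int}\operatorname{dom}\psi \cap \gZ$, so that both gradient evaluations and the strong-convexity inequality are valid; this is guaranteed by the domain conventions adopted in the setup.
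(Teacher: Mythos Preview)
Your proposal is correct and follows essentially the same argument as the paper: decompose $G_\rho(u)-G_\rho(v)$ so that the $\nabla\psi(\rho)$ term cancels, then invoke Lemma~\ref{lem:monotone_f} for the monotonicity of $F$ and $\mu$-strong convexity of $\psi$ for the gradient term. Your added remarks on deriving gradient strong monotonicity from the defining inequality and on the domain requirement for $\nabla\psi$ are fine elaborations but not needed beyond what the paper states.
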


\begin{proof}\label{proof:strong_monotone}
For any $u,v\in\gZ$,
\[
\langle G_\rho(u)-G_\rho(v),u-v\rangle
= \langle F(u)-F(v),u-v\rangle 
  + \alpha\langle \nabla\psi(u)-\nabla\psi(v),u-v\rangle.
\]
By Lemma~\ref{lem:monotone_f}, $F$ is monotone, so the first term is nonnegative.  
By $\mu$-strong convexity of $\psi$, the second term is at least $\alpha\mu\|u-v\|^2$.  
This proves $\alpha\mu$-strong monotonicity of $G_\rho$.
\end{proof}

\subsection{Main Theoretical Results}
\subsubsection{Proof of Lemma~\ref{lem:distance_non_increase}}
\label{proof:distance_non_increase}

\distancenonincreaselemma*
\begin{proof}
For notational convenience, define $z_\rho \coloneqq  \gM(\rho)$. 
The Nash equilibrium $z^*$ satisfies the VI optimality condition for $F$:
\begin{equation}
\label{eq:nash_opt}
\langle F(z^*), z - z^* \rangle \ge 0, \quad \forall z \in \gZ.
\end{equation}
Similarly, $z_\rho$ satisfies the VI optimality condition for the regularized operator $G_\rho$:
\begin{equation}
\label{eq:reg_opt}
\langle G_\rho(z_\rho), z - z_\rho \rangle \ge 0, \quad \forall z \in \gZ.
\end{equation}

Substitute $z = z_\rho$ in \eqref{eq:nash_opt} and $z = z^*$ in \eqref{eq:reg_opt}, then add the two inequalities:
\begin{equation}
\label{eq:add_ineq}
\langle F(z^*), z_\rho - z^* \rangle + 
\langle F(z_\rho) + \alpha \nabla_{z_\rho} B_\psi(z_\rho; \rho), z^* - z_\rho \rangle \ge 0.
\end{equation}
Rearranging \eqref{eq:add_ineq} gives
\begin{equation}
\label{eq:rearranged_1}
\alpha \langle \nabla_{z_\rho} B_\psi(z_\rho; \rho), z^* - z_\rho \rangle \ge 
\langle F(z^*) - F(z_\rho), z^* - z_\rho \rangle.
\end{equation}

By Lemma~\ref{lem:monotone_f}, $F$ is monotone on $\gZ$, so the right-hand side of \eqref{eq:rearranged_1} is nonnegative. Hence, since $\alpha > 0$, we obtain
\begin{equation}
\label{eq:inner_prod}
\langle \nabla_{z_\rho} B_\psi(z_\rho; \rho), z^* - z_\rho \rangle \ge 0.
\end{equation}
And since $\nabla_{z_\rho} B_\psi(z_\rho; \rho) = \nabla \psi(z_\rho) - \nabla \psi(\rho)$, \eqref{eq:inner_prod} becomes
\begin{equation}
\label{eq:final_inner}
\langle \nabla \psi(z_\rho) - \nabla \psi(\rho), z^* - z_\rho \rangle \ge 0.
\end{equation}

Finally, applying the three-point property of Bregman divergences (Lemma~\ref{lem:three_point}) with $a = z^*$, $c = z_\rho$, and $b = \rho$, we have
\begin{equation}
\label{eq:three_point}
B_\psi(z^*; \rho) = B_\psi(z^*; z_\rho) + B_\psi(z_\rho; \rho) + \langle \nabla \psi(z_\rho) - \nabla \psi(\rho), z^* - z_\rho \rangle.
\end{equation}
Combining \eqref{eq:three_point} with \eqref{eq:final_inner} immediately gives
\[
B_\psi(z^*; \rho) \ge B_\psi(z^*; z_\rho) + B_\psi(z_\rho; \rho).
\]
\end{proof}

\subsubsection{Proof of Lemma~\ref{lem:fixed_point}}
\label{proof:fixed_point}
\fixedpointlemma*
\begin{proof}
($\Rightarrow$) Suppose $z^*$ is a Nash equilibrium. By definition, $z^*$ solves $\mathrm{VI}(\gZ, F)$:
\begin{equation}
\label{eq:nash_fp}
\langle F(z^*), z - z^* \rangle \ge 0, \quad \forall z \in \gZ.
\end{equation}
Consider the regularized operator $G_{z^*}(z)$. By the VI optimality condition, $z^*$ also solves $\mathrm{VI}(\gZ, G_{z^*})$, since
\[
\langle G_{z^*}(z^*), z - z^* \rangle 
= \langle F(z^*), z - z^* \rangle + \alpha \langle \nabla_z B_\psi(z^*; z^*), z - z^* \rangle 
= \langle F(z^*), z - z^* \rangle \ge 0,
\]
using $\nabla_z B_\psi(z^*; z^*) = 0$. By definition of the operator \(\gM\), the unique solution of $\mathrm{VI}(\gZ, G_{z^*})$ is \(\gM(z^*)\). Therefore, $z^* = \gM(z^*)$.

($\Leftarrow$) Conversely, suppose $z^* = \gM(z^*)$. By definition of operator $\gM$, $z^*$ solves $\mathrm{VI}(\gZ, G_{z^*})$:
\begin{equation}
\langle G_{z^*}(z^*), z - z^* \rangle \ge 0, \quad \forall z \in \gZ.
\end{equation}
Expanding $G_{z^*}(z^*) = F(z^*) + \alpha \nabla_z B_\psi(z^*; z^*) = F(z^*)$, we see that $z^*$ satisfies
\[
\langle F(z^*), z - z^* \rangle \ge 0, \quad \forall z \in \gZ,
\]
i.e., $z^*$ is a Nash equilibrium.
\end{proof}

\subsubsection{Proof of Lemma~\ref{lem:continuity}}
\label{proof:continuity}

\continuitylemma*
\begin{proof}
Fix $\rho\in\operatorname{int}\operatorname{dom}\psi\cap\gZ$, let $\{\rho_n\}\subset\operatorname{int}\operatorname{dom}\psi\cap\gZ$ be any sequence with $\rho_n\to\rho$, and for notational convenience we write
\[
z_n \coloneqq  \gM(\rho_n),\qquad z_\rho \coloneqq  \gM(\rho).
\]
By the VI optimality conditions for $z_n$ and $z_\rho$ we have
\[
\langle G_{\rho_n}(z_n), z_\rho - z_n\rangle \ge 0
\quad\text{and}\quad
\langle G_{\rho}(z_\rho), z_n - z_\rho\rangle \ge 0.
\]
Adding these two inequalities yields
\[
\langle G_{\rho_n}(z_n)-G_{\rho}(z_\rho),\, z_\rho - z_n\rangle \ge 0,
\]
rearranges to get
\begin{equation} \label{eq:rearranged}
\langle G_{\rho_n}(z_n)-G_{\rho_n}(z_\rho),\, z_n - z_\rho\rangle
\le \langle G_{\rho_n}(z_\rho)-G_{\rho}(z_\rho),\, z_\rho - z_n\rangle.
\end{equation}
By $\alpha\mu$-strong monotonicity of $G_{\rho_n}$ (Lemma~\ref{lem:strong-monotone}), the left-hand side of \eqref{eq:rearranged} is bounded below by $\alpha\mu\|z_n-z_\rho\|^2$.  
Applying the Cauchy--Schwarz inequality to the right-hand side of \eqref{eq:rearranged} yields
\[
\alpha\mu \|z_n-z_\rho\|^2 \;\le\; 
\|G_{\rho_n}(z_\rho)-G_{\rho}(z_\rho)\|\cdot\|z_n-z_\rho\|.
\]
If $z_n = z_\rho$ the desired convergence holds trivially. Otherwise we divide both sides by $\|z_n - z_\rho\|>0$ to obtain
\begin{equation} \label{eq:basic-bound}
\|z_n - z_\rho\| \le \frac{1}{\alpha\mu}\,\|G_{\rho_n}(z_\rho)-G_{\rho}(z_\rho)\|.
\end{equation}
Since
\[
G_{\rho_n}(z_\rho)-G_{\rho}(z_\rho) 
= \alpha\big(\nabla\psi(\rho)-\nabla\psi(\rho_n)\big),
\]
we have that \eqref{eq:basic-bound} simplifies to
\begin{equation} \label{eq:quantitative}
\|z_n - z_\rho\| \le \frac{1}{\mu}\,\|\nabla\psi(\rho_n)-\nabla\psi(\rho)\|.
\end{equation}
Since $\nabla\psi$ is continuous on $\operatorname{int}\operatorname{dom}\psi$ and $\rho_n\to\rho$, the right-hand side of \eqref{eq:quantitative} vanishes as $n\to\infty$. Therefore $\gM(\rho_n)\to\gM(\rho)$. As this holds for every sequence $\rho_n\to\rho$, the mapping $\gM$ is continuous on $\operatorname{int}\operatorname{dom}\psi\cap\gZ$.
\end{proof}

\subsection{First-Order Equivalence of the Mirror Step (Equation~\ref{eq:regularized_objective_mixed})}
\label{app:first_order_equiv}

\begin{proposition}[First-Order Equivalence]
\label{prop:first_order}
With $\psi_1$ set to the negative entropy, the mirror-ascent step for player 1 (line 7 of Algorithm~\ref{alg:mmd_iterative_m}) satisfies
\[
x_{k+1} = x_k + \eta\,\nabla^{\mathrm{nat}}_x g(x_k) + O(\eta^2),
\]
where $g(x) = f(x,y_k) - \alpha\operatorname{KL}(x\|\rho_1)$ and the natural gradient on the simplex is
$\nabla^{\mathrm{nat}}_{x_i} g(x) \coloneqq x_i\bigl[\nabla_{x_i}g(x) - \langle x,\nabla_x g(x)\rangle\bigr]$.
\end{proposition}

\begin{proof}
Since $f(x,y_k)=x^\top \mA y_k$ is linear in $x$, we have $\langle \nabla_x f(x_k,y_k),x\rangle = f(x,y_k)$ exactly (the gradient $\mA y_k$ is independent of $x_k$). Hence the mirror step is
\begin{equation}
x_{k+1} = \operatorname{argmax}_{x\in\gX}\bigl\{\eta\,g(x) - \operatorname{KL}(x\|x_k)\bigr\}.
\label{eq:proximal_step}
\end{equation}
This is exact, not a linearization. The KKT condition at the interior maximizer $x_{k+1}$ (with multiplier $c$ for $\sum_i x_i=1$) reads
\[
\eta\,\nabla_{x_i}g(x_{k+1}) - \log(x_{k+1,i}/x_{k,i}) = c \quad \forall i.
\]
At $\eta=0$ the unique solution is $x_{k+1}=x_k$ with $c=0$. Differentiating implicitly in $\eta$ at $\eta=0$ and writing $\dot x = \tfrac{d}{d\eta}x_{k+1}\big|_{\eta=0}$:
\[
\nabla_{x_i}g(x_k) - \dot x_i/x_{k,i} = \dot c \quad \forall i.
\]
Multiplying by $x_{k,i}$ and summing over $i$, with $\sum_i \dot x_i = 0$, gives $\dot c = \langle x_k,\nabla_x g(x_k)\rangle$. Substituting back:
\[
\dot x_i = x_{k,i}\bigl[\nabla_{x_i}g(x_k) - \langle x_k,\nabla_x g(x_k)\rangle\bigr] = \nabla^{\mathrm{nat}}_{x_i}g(x_k). \qedhere
\]
\end{proof}

\subsubsection{Proof of Convergence of Iterative Refinement Procedure}
\label{proof:convergence}
\convergencetheorem*
\begin{proof}
Fix any Nash equilibrium $z^\ast$. For the first part of the theorem, we show the strict inequality holds for all iterations before convergence occurs.
Apply Lemma~\ref{lem:distance_non_increase} with $\rho=z_t$ and recall $z_{t+1}=\gM(z_t)$ to obtain
\[
B_\psi(z^*; z_t) \ge B_\psi(z^*; z_{t+1}) + B_\psi(z_{t+1}; z_t).
\]
If the iteration has not converged then $z_{t+1}\neq z_t$, therefore $B_\psi(z_{t+1}; z_t)>0$. Hence the preceding inequality is strict, yielding
\[
B_\psi(z^*; z_t) > B_\psi(z^*; z_{t+1}).
\]

Next, we show the sequence converges to a Nash equilibrium. Iterating the inequality from Lemma~\ref{lem:distance_non_increase}, we get, for any $k \geq 1$,
\begin{align}
B_\psi(z^*; z_0) 
&\ge B_\psi(z^*; z_{1}) + B_\psi(z_{1}; z_0) \notag\\
&\ge B_\psi(z^*; z_{2}) + B_\psi(z_{2}; z_1) + B_\psi(z_{1}; z_0) \notag\\
&\ge B_\psi(z^*; z_{k}) + \sum_{\ell=0}^{k-1}B_\psi(z_{\ell+1}; z_{\ell})\;. \label{eq:convergence:proof:step}
\end{align}
As all terms in the RHS are non-negative, this implies that the partial sums of the series of non-negative terms $\sum_{\ell=0}^{k-1}B_\psi(z_{\ell+1}; z_{\ell})$ are uniformly bounded, and so the series converges. In particular $B_\psi(z_{k+1}; z_k) \to 0$, which we will use below. Moreover, Equation~\eqref{eq:convergence:proof:step} also implies 
$
B_\psi(z^*; z_{k}) \leq B_\psi(z^*; z_0)
$, and the sequence $\{z_k\}_k$ belongs to the sublevel set $S \coloneqq \{z : B_\psi(z^*; z) \leq r\}$ where $r \coloneqq B_\psi(z^*; z_0)$. Since $z^* \in \operatorname{int}\operatorname{dom}\psi$, the function $z \mapsto B_\psi(z^*; z)$ diverges to $+\infty$ as $z$ approaches $\partial\operatorname{dom}\psi$; hence $S$ is compact and contained in $\operatorname{int}\operatorname{dom}\psi$. As a result, there exists a convergent subsequence $\{z_{\phi(k)}\}_k$ (where $\phi\colon \mathbb{N}\to\mathbb{N}$ is increasing) with limit
\[
z_\infty \coloneqq  \lim_{k\to\infty} z_{\phi(k)}\,.
\]
Since $S$ is closed and $\{z_{\phi(k)}\} \subset S$, we have $z_\infty \in S \subset \operatorname{int}\operatorname{dom}\psi$. We show that $z_\infty$ is a Nash equilibrium. From the above,
\[
B_\psi(\mathcal{M}(z_{\phi(k)}); z_{\phi(k)}) = B_\psi(z_{\phi(k)+1}; z_{\phi(k)}) \xrightarrow[k\to\infty]{} 0
\]
while, by continuity of the Bregman divergence and of the $\mathcal{M}$ operator (\cref{lem:continuity}),
\[
B_\psi(\mathcal{M}(z_{\phi(k)}); z_{\phi(k)}) \xrightarrow[k\to\infty]{} B_\psi(\mathcal{M}(z_{\infty}); z_{\infty}) 
\]
and so, by uniqueness of the limit, $B_\psi(\mathcal{M}(z_{\infty}); z_{\infty})  = 0$. \cref{lem:fixed_point} then allows us to conclude that $z_\infty$ is a Nash equilibrium.

We can then conclude that $z_t$ converges to $z_\infty$: indeed, applying Lemma~\ref{lem:distance_non_increase} to the Nash equilibrium $z_\infty$, we get as the beginning of the proof that, for all $t, s\geq 0$,
\[
B_\psi(z_\infty; z_t) \ge B_\psi(z_\infty; z_{t+s})
\]
(i.e., $B_\psi(z_\infty; z_t)$ is a non-increasing, non-negative sequence). Since $\lim_{k\to\infty} z_{\phi(k)} = z_\infty$, this easily allows us to conclude that the sequence $\{z_t\}_t$ converges to $z_\infty$ as well. (In more detail: fixing $\varepsilon > 0$, there exists $k_{\varepsilon} \geq 0$ such that, for all $k\geq k_{\varepsilon}$, $B_\psi(z_\infty; z_{\phi(k)}) \leq \varepsilon$. Then, for all $t \geq \phi(k_{\varepsilon})$, we have $B_\psi(z_\infty; z_{t}) \leq B_\psi(z_\infty; z_{\phi(k_{\varepsilon})}) \leq \varepsilon$.) This concludes the proof.
\end{proof}

\section{Experiment Details}
\label{app:exp_details}

\subsection{Environment Domain}
\label{app:envs}
We evaluate \textsc{NashPG} across seven 2p0s IIGs. Our benchmark covers \textbf{Kuhn Poker}, \textbf{Leduc Poker}, \textbf{Abrupt Dark Hex (3$\times$3)}, \textbf{Abrupt Phantom Tic-Tac-Toe}, \textbf{Liar's Dice}, \textbf{Battleship}, and \textbf{Heads-Up No-Limit Texas Hold'em}. For the two imperfect-information board games (Dark Hex and Phantom Tic-Tac-Toe), we apply the \emph{abrupt} turn rule, i.e.\ any blocked move immediately ends the acting player's turn, and we follow the configurations provided by OpenSpiel~\citep{LanctotEtAl2019OpenSpiel}. Architectural details are summarised in Table~\ref{tab:summary_environment}. We denote by $\Delta$ the true gain or loss within a game or hand in poker-related environments. For instance, in Heads-Up No-Limit Texas Hold'em, if two players build a \$50 pot through betting and calling, and Player~1 bets an additional \$30 and Player~2 folds, then $\Delta=+25$ for Player~1 and $-25$ for Player~2.
Comprehensive environment rules, architectural specifications, and reward formulations are listed below for reproducibility.

\textbf{Kuhn Poker}: 
A three-card, two-player imperfect-information poker game in which both players ante one chip and act sequentially by betting or passing. Rewards are zero during play; at termination, the winner receives the pot and the loser forfeits their contribution. Observations include three dimensions for one-hot encoding of the private card (J=0, Q=1, K=2) and four dimensions for the complete betting history from the current player's perspective.

\textbf{Leduc Poker}: 
An variant of Poker with six cards (two suits, three ranks), two betting rounds, and a public card revealed in the second round; raises are of fixed size. Rewards are zero during play; a fold immediately ends the hand in favour of the non-folding player, otherwise showdown is decided by pair strength or high card. Observations include 14 dimensions for one-hot encoding of the public and private cards (J$\spadesuit$=0, Q$\spadesuit$=1, K$\spadesuit$=2, J$\heartsuit$=3, Q$\heartsuit$=4, K$\heartsuit$=5), one for the revealed public-card indicator, two for the round indicator, two for player position, and 32 for the full betting history.

\textbf{Abrupt Dark Hex (3$\times$3)}: 
A two-player imperfect-information variant of Hex on a $3\times3$ board: Player~0 aims to connect top to bottom, and Player~1 aims to connect left to right. Each player observes only their own successful placements and opponent stones discovered by attempting an already-occupied cell (a blocked move). We apply the \emph{abrupt} rule. Rewards are zero during play; a successful placement that completes the acting player's connection immediately ends the game with $\pm1$ (Hex admits no draws on a filled board). Observations are a length-9 vector with entries $\{-1,0,1\}$ denoting unknown, own stone, and discovered opponent stone.

\textbf{Abrupt Phantom Tic-Tac-Toe}: 
A two-player imperfect-information version of Tic-Tac-Toe on a $3\times3$ grid: players alternate attempting to place a mark but observe only their own successful placements and cells where a previous attempt was blocked (revealing an opponent mark). We also use the \emph{abrupt} rule. Rewards are zero during play; a successful placement forming three-in-a-row yields $\pm1$, and a full board with no line results in a draw with $0$ to both players. Observations are a length-9 vector with $\{-1,0,1\}$ encoding unknown, own mark, and discovered opponent mark.

\textbf{Liar's Dice}: 
We implement a single-hand version with 5 dice, each having 6 faces. This is a sequential bidding game in which each player privately rolls dice and alternately bids on the total count of a face value across both players until someone issues a challenge; the dice are then revealed to resolve the claim. Rewards are zero until a challenge occurs; the previous bidder wins if the claim is correct, and the challenger wins otherwise, yielding $\pm1$. Observations contain 16 historical bidding states, each using three dimensions to record player ID, bid quantity, and bid face, plus six dimensions for the counts of each face in the player's private dice.

\textbf{Battleship}: 
A two-stage, two-player game on a $10\times10$ grid: players first place ships, then alternate attacks until one fleet is destroyed. Rewards are $+1$ for victory and $-1$ for defeat; termination occurs when a player's fleet is eliminated. Observations include 100 dimensions for the grid state, 10 for remaining ship statuses, and one for the current stage.

\textbf{Heads-Up No-Limit Texas Hold'em}: 
A two-player no-limit poker variant with blinds and four betting streets (preflop, flop, turn, river) and up to five community cards. Actions include folding, checking/calling, a 16 different fractional pot-sized raises, and all-in moves. At the beginning, the small blind/big blind is set to 1/2 and stacks are initialized at 200 chips (100 big blinds) per player. Rewards are zero during play and correspond to normalized stack changes at the end of each hand; termination within a hand occurs upon a fold or at showdown (the overall match ends if a player goes bankrupt). Observations include two dimensions for the private and five for community cards, two for each player's stacks, two for each player's bets, one for pot size, one for the current street ID, and a sequence of historical actions. We use a history window of 64, each entry containing four dimensions (player ID, street ID, action taken, and betting amount).

\subsection{Number of Information States}
\label{app:info_states}

For \textbf{Kuhn Poker} and \textbf{Leduc Poker}, the information-state counts are computed exactly using OpenSpiel's~\citep{LanctotEtAl2019OpenSpiel} built-in game-tree enumeration script (\texttt{count\_states}), which traverses the full game tree and tallies distinct information states per player. For \textbf{Abrupt Dark Hex (3$\times$3)} and \textbf{Abrupt Phantom Tic-Tac-Toe}, the counts are taken directly from \citet{rudolph2025reevaluating}. For the three large-scale environments (\textbf{Liar's Dice}, \textbf{Battleship}, and \textbf{Heads-Up No-Limit Texas Hold'em}), we calculate ourself.

\textbf{Liar's Dice}:
At any decision point, a player's information state is determined by (i) their private dice hand and (ii) the public bidding history. The hand is a multiset of $5$ dice drawn from $6$ faces, giving $\binom{5+6-1}{5}=252$ distinct hands. Each bid is a (quantity, face) pair with quantity in $\{1,\ldots,10\}$ and face in $\{1,\ldots,6\}$, yielding $60$ bids under a total lexicographic order; a challenge terminates the game. Because successive bids must be strictly increasing, a non-terminal public history is exactly a subset of the $60$ bids, giving $\sum_{k=0}^{60}\binom{60}{k}=2^{60}$ histories. Multiplying the two independent components yields
\[
N_\text{info}^{\text{LD}} \;=\; 252 \cdot 2^{60} \;\approx\; 2.9 \times 10^{20}.
\]

\textbf{Battleship}:
At any non-terminal decision point in the attack phase, the player's information state contains (at minimum) two independent components: (i) their own ship placement, of which there are $A = 30{,}093{,}975{,}536$ for the standard fleet (ship lengths $\{5,4,3,3,2\}$, totaling $S = 17$ ship cells per player) on a $10\times10$ grid~\citep{111803}, and (ii) an attack-result board against the opponent, which labels each of the $100$ cells as \emph{unattacked}, \emph{miss}, or \emph{hit}. To lower-bound the number of reachable attack-result boards, fix any single opponent placement $P$ covering $17$ cells. For each subset $S\subseteq [100]$ of cells the player has attacked, the attack-result board is uniquely determined by $P$ (hits on $S\cap P$, misses on $S\setminus P$, unattacked on $[100]\setminus S$), and distinct $S$ yield distinct boards. The non-terminal constraint $|S\cap P|\le 16$ excludes only $2^{83}$ of the $2^{100}$ subsets, giving at least $2^{83}$ reachable boards. Combining with the $A$ ship placements,
\[
N_\text{info}^{\text{BS}} \;\geq\; A \cdot 2^{83} \;\approx\; 2.9 \times 10^{35}.
\]

\textbf{Heads-Up No-Limit Texas Hold'em}:
The full game of heads-up no-limit Texas Hold'em is vastly larger than any of the other environments considered here. In principle, players may wager any integer number of chips up to their entire stack, community cards can be combined with private hands in an astronomical number of ways, and stacks evolve continuously across hands, making an exact information-state count intractable. To obtain a concrete lower bound that reflects the complexity experienced by an agent operating under our experimental rules, we restrict to a fully discretized version of the game and enumerate its game tree exactly.

We enumerate information states under the standardized rule set used in our experiments: both players begin with $100$~big blinds (BB), the small/big blind is $0.5/1$~BB, and four betting streets are played (preflop, flop, turn, river). On the preflop, the small blind may limp, open to one of $13$ discrete sizes ($2.0\text{--}3.5$~BB), or move all-in ($15$ first-action options in total). On any street, the acting player may check/call, choose among $F=14$ fractional pot-size bets/raises ($20\%,25\%,33\%,40\%,50\%,66\%,75\%,100\%,125\%,150\%,175\%,200\%,300\%,400\%$ of the current pot), move all-in, or fold; the total number of bets and raises per street is capped at $B=16$. Since the discretized action space is a strict subset of the full no-limit action space, the resulting count is a lower bound on the true game complexity. A player's information state at any decision point is uniquely determined by (i) their two private hole cards, (ii) the community cards dealt so far, and (iii) the public betting history (which encodes both players' current chip stacks).

We count information states by exhaustively enumerating the game tree under these rules. To make the computation tractable, all chip amounts are rounded to the nearest $\tfrac{1}{12}$~BB before hashing; at this granularity, rounded states that differ correspond to genuinely distinguishable chip configurations, so no two distinct information sets are collapsed.

The enumeration proceeds in two phases. \emph{Phase~1} walks the preflop betting tree with memoization, keying each decision node on the rounded (pot, effective stack, last-bet amount, cumulative bet count, acting-player role) tuple. Upon reaching the flop we record each unique rounded (pot, effective-stack) pair and the number of distinct preflop histories that lead to it, yielding $942$ unique flop-entry configurations across $25{,}664$ distinct preflop paths. \emph{Phase~2} processes each flop-entry configuration independently: we build a directed acyclic graph (DAG) of postflop decision nodes spanning all three postflop streets, compute via topological sort the number of distinct root-to-node paths reaching every node, and weight the resulting path counts by the corresponding preflop history count before accumulating.

Let $D_s$ denote the total weighted decision-node count on street $s$ (summed over both players), and let $C_s$ denote the number of (hole-card, community-card) combinations observable by a single player upon entering that street:
\begin{align*}
C_\text{pre} &= \tbinom{52}{2} = 1{,}326, \\
C_\text{flop} &= \tbinom{52}{2}\tbinom{50}{3} = 25{,}989{,}600, \\
C_\text{turn} &= \tbinom{52}{2}\tbinom{50}{3}\tbinom{47}{1} = 1{,}221{,}511{,}200, \\
C_\text{river} &= \tbinom{52}{2}\tbinom{50}{3}\tbinom{47}{1}\tbinom{46}{1} = 56{,}189{,}515{,}200.
\end{align*}
The total information-state count is $N_\text{info}^{\text{NLHE}} = \sum_s D_s \cdot C_s$. The per-street breakdown is:

\begin{center}
\begin{tabular}{lrrr}
\toprule
Street & Decision nodes $D_s$ & Card combinations $C_s$ & Information sets $D_s \cdot C_s$ \\
\midrule
Preflop & $25{,}665$            & $1{,}326$                  & $3.403\times 10^{7}$ \\
Flop    & $26{,}273{,}552$      & $25{,}989{,}600$           & $6.828\times 10^{14}$ \\
Turn    & $339{,}227{,}852$     & $1{,}221{,}511{,}200$      & $4.144\times 10^{17}$ \\
River   & $2{,}373{,}722{,}384$ & $56{,}189{,}515{,}200$     & $1.334\times 10^{20}$ \\
\midrule
\textbf{Total} & $2{,}739{,}249{,}453$ & --- & $1.338\times 10^{20}$ \\
\bottomrule
\end{tabular}
\end{center}
\[
N_\text{info}^{\text{NLHE}} \;\geq\; 1.338\times 10^{20}.
\]

\subsection{Algorithm Settings}
\label{app:algs}
We evaluate \textsc{NashPG} with four established model-free baselines: Neural Fictitious Self-Play (NFSP)~\citep{DBLP:journals/corr/HeinrichS16}, Policy Space Response Oracles (PSRO)~\citep{DBLP:conf/nips/LanctotZGLTPSG17}, Magnetic Mirror Descent (MMD)~\citep{DBLP:conf/iclr/SokotaDKLLMBK23} and Regularised Nash Dynamics (R-NaD)~\citep{perolat2022mastering}. All methods except R-NaD employ PPO~\citep{DBLP:journals/corr/SchulmanWDRK17} for best-response learning or policy updates, with hyperparameters listed in \Cref{tab:ppo_hparams}. Detailed implementation notes for each algorithm are provided below.

\noindent\textbf{NFSP.} Maintains a uniform mixture of policies comprising all previously trained agents. Each iteration, a new agent is initialized and uses PPO to train an approximate best-response agent against the mixture of prior policies. The newly trained agent is added to the population after each iteration.

\noindent\textbf{PSRO.} The setup follows NFSP but replaces uniform mixing with a meta-game constructed from pairwise interactions among historical agents. Each entry of the payoff matrix is estimated from rollouts with $128$ parallel environments and $1{,}000$ steps per environment. A Nash equilibrium of this meta-game is then approximated via fictitious play~\citep{brown1951iterative} for $10{,}000$ iterations, and the resulting mixed strategy defines the new population distribution. Training then continues with a newly initialized agent. In addition, we train a distillation agent by behavior cloning on rollout actions generated by the opponent mixture, yielding a single policy trained from the mixture's induced behavior.

\noindent\textbf{MMD.} A single policy is trained in self-play using PPO. Our implementation follows the implementation in \citet{rudolph2025reevaluating}. Based on their setup, the magnet policy is set to a uniform policy where we control the regularization strength by the entropy regularization, we set the magnet coefficient to $0.05$ aligned with \citet{DBLP:conf/iclr/SokotaDKLLMBK23}.

\noindent\textbf{R-NaD.} Our implementation of R-NaD is heavily inspired by \citet{rudolph2025reevaluating}. Within each outer-loop iteration, a fixed anchor policy defines the regularization term and remains unchanged during the inner-loop updates. For each batch of self-play trajectories, we compute per-player V-trace targets under the regularized reward and optimize the sum of a critic regression loss and a Neural Replicator Dynamics (NeuRD)~\citep{hennes2020neural} policy loss.

\begin{table}[H]
  \centering
\begin{tabular}{@{} l | l @{}}
    \toprule
    \textbf{Hyperparameter} & \textbf{Value} \\
    \midrule
    Optimizer                       & AdamW \\
    Step size                       & \num{3.0e-4} \\
    Num.\ PPO epochs                & 4 \\
    Num.\ minibatches               & 4 \\
    Discount ($\gamma$)             & 1 \\
    GAE parameter ($\lambda$)       & 0.95 \\
    Clipping parameter ($\epsilon$) & 0.2 \\
    Entropy coeff.                  & 0.1 \\
    Max grad norm                   & 0.5 \\
    Number of envs.                 & 64 \\
    Rollout length                  & 64 \\
    \bottomrule
  \end{tabular}
  \caption{Shared PPO hyperparameters}
  \label{tab:ppo_hparams}
\end{table}

\subsection{Evaluation Metrics}
\label{app:evas}
In our work, we employ both \textit{exploitability} and \textit{head-to-head matches} to evaluate the policies of all baselines and \textsc{NashPG}.

\textbf{Exploitability:} Consider a two-player zero-sum game with player's behavioural strategies $\pi_1$ and $\pi_2$. Let $\mu_i(\pi_i, \pi_{-i})$ denote the expected payoff for player $i$, where $-i$ denotes the opponent of player $i$. The best-response strategy for player $i$ is defined as
\[
BR_i(\pi_{-i}) = \arg\max_{\pi_i'} \mu_i(\pi_i',\pi_{-i})
\]
Given a suboptimal strategy $\pi_i$, the incentive for player $i$ to deviate is
\[
\delta_i = \mu_i\bigl(BR_i(\pi_{-i}),\pi_{-i}\bigr)-\mu_i(\pi_i,\pi_{-i})
\]
The exploitability is then defined as \[
\text{Exploitability} = \mathbb{E}_{i \sim \text{Uniform}} \bigl[\delta_i\bigr]
\]
In our experiments, the procedure used to obtain $BR_i(\pi_{-i})$ depends on the environment. For Kuhn Poker and Leduc Poker, we compute exact exploitability using OpenSpiel's exploitability routine~\citep{LanctotEtAl2019OpenSpiel}. We wrap our learned policy as an OpenSpiel-compatible behavioral policy that assigns probabilities to legal actions at each information state, and then compute the exact best-response gains for both players in the corresponding game tree. For Abrupt Dark Hex and Abrupt Phantom Tic-Tac-Toe, we use \texttt{exp-a-spiel}~\citep{rudolph2025reevaluating}, a tabular solver, to compute exact exploitability. We enumerate all infosets, evaluate the learned policy at each infoset to obtain a full strategy table, and then derive the best-response policy and its value against the target policy via dynamic programming. For larger environments where exact game-tree computation is infeasible, we approximate the best-response value by training a newly initialized PPO exploiter against the fixed target policy at each evaluation checkpoint. In our large-environment experiments, the exploiter is trained for $10{,}000$ updates. All other optimization and algorithmic hyperparameters follow the settings reported in \cref{tab:ppo_hparams}.

\textbf{Head-to-Head Matches:} For large environments where exact exploitability is intractable, we complement approximate exploitability with direct pairwise evaluation. Specifically, the final policy of each baseline plays against the final policy of \textsc{NashPG}, and we report the average payoff of \textsc{NashPG} over $1{,}024$ games. Each game is played with randomly assigned player positions to control for positional advantage. A positive average payoff indicates that \textsc{NashPG} outperforms the baseline in direct play. All results are averaged over 5 random seeds and reported as mean~$\pm$~standard deviation.

\begin{table}[H]
  \centering
  \resizebox{0.95\linewidth}{!}{
    \begin{tabular}{lccccccc}
      \toprule
      \multirow[t]{2}{*}{Environment}
                        & \textbf{Kuhn}     & \textbf{Leduc}  & \textbf{Dark Hex}     & \textbf{Phantom}      & \textbf{Liar's} & \textbf{Battleship} & \textbf{Heads-Up}\\
                        & \textbf{Poker}    & \textbf{Poker}  & \textbf{(3$\times$3)} & \textbf{Tic-Tac-Toe}  & \textbf{Dice} &                     & \textbf{Hold'em}\\
      \midrule
      Players           & 2                 & 2               & 2                     & 2                     & 2             & 2               & 2   \\
      Obs. Dim.         & 7                 & 49              & 9                     & 9                     & 102           & 111             & 285   \\
      Action spaces     & 2                 & 3               & 9                     & 9                     & 61            & 100             & 16 \\
      \midrule
      Feature Extractor & Dense MLP            & Dense MLP          & Dense MLP                & Dense MLP                & Transformer   & CNN             & Transformer \\
      Layers            & 2                 & 2               & 2                     & 2                     & 2             & 2               & 2           \\
      Hidden Dim.       & 16                & 64              & 64                    & 64                    & 64            & 128             & 512         \\
      Attention Heads   & --                & --              & --                    & --                    & 1             & --              & 4           \\
      History Length    & --                & --              & --                    & --                    & 32            & --              & 64          \\
      \midrule
      Policy Head       & Linear            & Linear          & Linear                & Linear                & Linear        & CNN             & Linear      \\
      Policy Layers     & 1                 & 1               & 1                     & 1                     & 2             & 2               & 1           \\
      Critic Head       & Linear            & Linear          & Linear                & Linear                & Linear        & Linear          & Linear      \\
      Critic Layers     & 1                 & 1               & 1                     & 1                     & 2             & 1               & 1           \\
      \midrule
      Reward Design     & $\Delta$          & $\Delta/20$     & $\pm 1$               & $\pm 1$, 0            & $\pm 1$       & $\pm 1$  & $\Delta/\text{Stack}$ \\
      \bottomrule
    \end{tabular}
  }
  \caption{Summary of environments and model architectures}
  \label{tab:summary_environment}
\end{table}

\section{Additional Experiment}

\subsection{Anneal Regularization Strength as Nash Equilibria solver}
\label{subsec:anneal_regularization}

\citet{DBLP:conf/iclr/SokotaDKLLMBK23} introduced \emph{Magnetic Mirror Descent} (MMD) and suggested two potential strategies for applying MMD to compute Nash equilibria: (i) \emph{moving magnet}, which interpolates between the magnet policy and the current policy, and (ii) \emph{annealing the regularization strength}. While the moving magnet approach is conceptually appealing, it requires geometric blending of policies at every information set, which is computationally infeasible in large games. This leaves annealing as the more practical alternative.

Here, we examine the feasibility of annealing regularization strength in the presence of stochastic sampling noise, i.e., when gradients are estimated from sampled trajectories rather than exact feedback. We conduct experiments on \emph{Kuhn Poker}, a small imperfect-information game that exact computation of exploitability is feasible, making it a suitable testbed.

\begin{figure}[H]
    \centering
    \includegraphics[width=.5\textwidth]{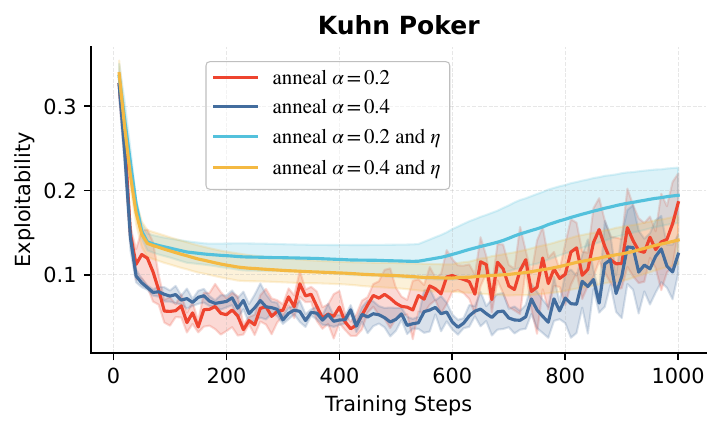}
    \caption{Exploitability under annealing different initial values for $\alpha$ (optionally annealing $\eta$). In all cases, exploitability diverges as $\alpha$ decreases.}
    \label{fig:mmd_experiment}
\end{figure}

We use the same PPO hyperparameters as all other experiments and investigate the effect of linearly decaying the magnet coefficient $\alpha$. Specifically, we test initial values $\alpha \in \{0.2, 0.4\}$, annealed linearly to $0.001$. We additionally consider jointly decaying both $\alpha$ and the learning rate $\eta$, with $\eta$ decayed linearly to zero. Each setting is repeated over four runs, and we report the mean and standard deviation of exploitability. Results are shown in Figure~\ref{fig:mmd_experiment}.

The results reveal a consistent pattern: exploitability decreases steadily during the early stages of training, but as $\alpha$ becomes small, performance deteriorates and eventually diverges. Even when annealing $\eta$ jointly with $\alpha$, training proceeds more slowly and still exhibits divergence at later stages. These findings indicate that annealing regularization strength is inherently unstable in the stochastic setting. While careful hyperparameter tuning may mitigate these issues, the dependence of the dynamics on both $\alpha$ and $\eta$ (recall the MMD convergence constraint $\alpha \geq \mu \eta L^2$) makes annealing regularization an unattractive choice as a general-purpose solver for Nash equilibria.

\end{document}